\newtheorem{definition}{Definition}
\newtheorem{proposition}{Proposition}
\useunder{\uline}{\ul}{}
\newcommand{\revise}[1]{\textcolor{black}{#1}} 
\begin{document}

\title{FreRA: A Frequency-Refined Augmentation for Contrastive Learning on Time Series Classification}

\author{Tian Tian}
\email{tian006@e.ntu.edu.sg}
\affiliation{
  \institution{Alibaba-NTU Singapore Joint Research Institute, Interdisciplinary Graduate Programme, Nanyang Technological University}
  \country{Singapore}}

\author{Chunyan Miao}
\email{ascymiao@ntu.edu.sg}
\affiliation{
  \institution{College of Computing and Data Science, Nanyang Technological University}
  \country{Singapore}}

\author{Hangwei Qian}
\email{qian_hangwei@cfar.a-star.edu.sg}
\affiliation{
  \institution{CFAR, A*STAR}
  \country{Singapore}}
  
\begin{abstract}
Contrastive learning has emerged as a competent approach for unsupervised representation learning. However, the design of an optimal augmentation strategy, although crucial for contrastive learning, is less explored for time series classification tasks. Existing predefined time-domain augmentation methods are primarily adopted from vision and are not specific to time series data. Consequently, this cross-modality incompatibility may distort the \revise{semantically relevant information} of time series by introducing mismatched patterns into the data. 
To address this limitation, we present a novel perspective from the frequency domain and identify three advantages for downstream classification: 1) the frequency component naturally encodes \textit{global} features, 2) the orthogonal nature of the Fourier basis allows \textit{easier isolation} and \textit{independent modifications} of critical and unimportant information, and 3) a \textit{compact} set of frequency components can preserve semantic integrity. To fully utilize the three properties, we propose the lightweight yet effective \textbf{Fre}quency-\textbf{R}efined \textbf{A}ugmentation (\textbf{FreRA}) tailored for time series contrastive learning on classification tasks, which can be seamlessly integrated with contrastive learning frameworks in a plug-and-play manner. Specifically, FreRA automatically separates critical and unimportant frequency components. Accordingly, we propose \revise{semantic-aware} Identity Modification and \revise{semantic-agnostic} Self-adaptive Modification to protect \revise{semantically relevant information} in the critical frequency components and infuse variance into the unimportant ones respectively. 
Theoretically, we prove that FreRA generates semantic-preserving views. Empirically, we conduct extensive experiments on two benchmark datasets, including UCR and UEA archives, as well as five large-scale datasets on diverse applications. FreRA consistently outperforms ten leading baselines on time series classification, anomaly detection, and transfer learning tasks, demonstrating superior capabilities in contrastive representation learning and generalization in transfer learning scenarios across diverse datasets. The code is available at https://github.com/Tian0426/FreRA.
\end{abstract}


\begin{CCSXML}
<ccs2012>
   <concept>
       <concept_id>10010147.10010178.10010187.10010193</concept_id>
       <concept_desc>Computing methodologies~Temporal reasoning</concept_desc>
       <concept_significance>500</concept_significance>
       </concept>
    <concept>
<concept_id>10010147.10010257.10010258.10010260</concept_id>
<concept_desc>Computing methodologies~Unsupervised learning</concept_desc>
<concept_significance>500</concept_significance>
</concept>
    <concept>
<concept_id>10010147.10010257.10010293.10010294</concept_id>
<concept_desc>Computing methodologies~Neural networks</concept_desc>
<concept_significance>500</concept_significance>
</concept>
<concept>
<concept_id>10010147.10010257.10010282</concept_id>
<concept_desc>Computing methodologies~Learning settings</concept_desc>
<concept_significance>500</concept_significance>
</concept>
 </ccs2012>
\end{CCSXML}

\ccsdesc[500]{Computing methodologies~Temporal reasoning}
\ccsdesc[500]{Computing methodologies~Unsupervised learning}
\ccsdesc[300]{Computing methodologies~Learning settings}
\ccsdesc[300]{Computing methodologies~Neural networks}

\keywords{time series classification, contrastive learning, automatic augmentation, self-supervised learning}


\maketitle

\begin{figure}[t]
  \centering
  \includegraphics[width=0.46\textwidth]{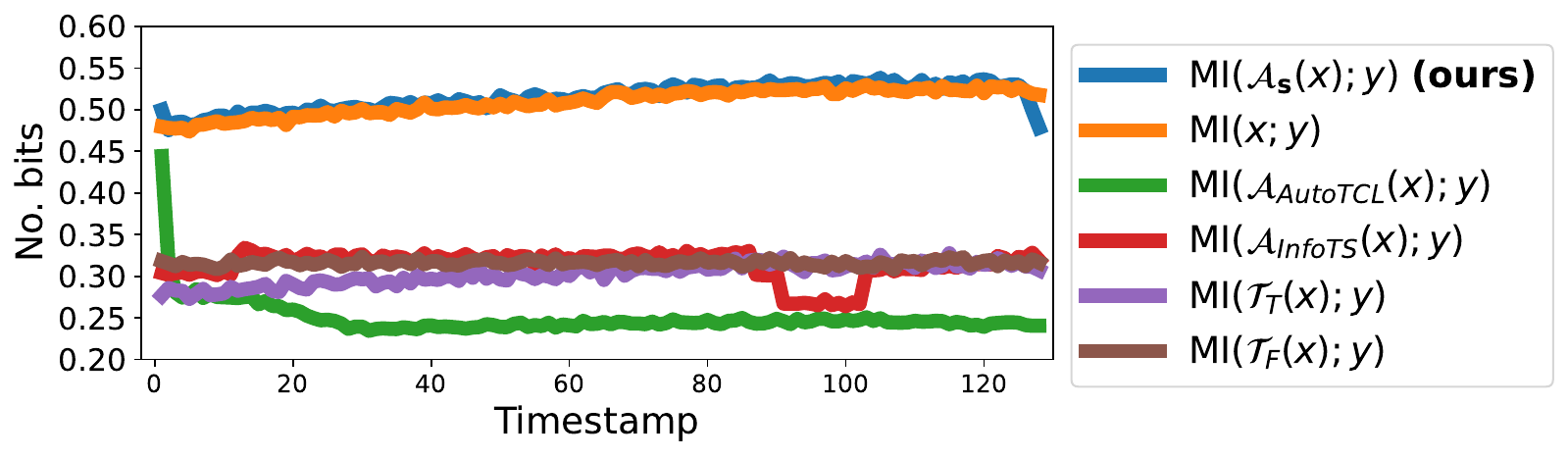}
  \caption{Our method (blue curve) achieves the highest MI between the views generated and the label, enabling better semantic preservation compared with SOTA. The \revise{semantically relevant information} is well preserved to facilitate contrastive representation learning.}
  \label{fig:MI}
  \vspace{-15pt}
\end{figure}

\section{Introduction}

Time series classification has been an essential problem in a wide range of applications, such as activity recognition~\citep{qian2019novel}, speech recognition~\citep{huijben2023som}, and industrial monitoring~\citep{DBLP:journals/pami/EldeleRCWKLG23}. Despite the promising performance achieved by supervised methods~\citep{qian2019novel}, a large number of accurate labels are required to deliver good performance. However, label annotation for time series without human error is costly and time-consuming. This is because time series data are not intuitively recognizable or meaningful for humans, unlike images or language. Given the circumstance, contrastive learning has been attested as a compelling framework for representation learning in the absence of labels~\citep{DBLP:journals/corr/abs-2308-01578,DBLP:conf/kdd/QianTM22}. Specifically, it learns to solve an instance discrimination pretext task~\citep{DBLP:conf/cvpr/WuXYL18} that aims to distinguish different samples (negative pairs) while keeping different views of the same sample (positive pairs) close, wherein different views are usually generated by a set of augmentation functions. 

Despite the prevalence of contrastive learning~\citep{DBLP:conf/cvpr/ChenH21,DBLP:conf/cvpr/HuangZS23}, its efficacy heavily relies on the proper selection of data augmentation~\citep{luo2023time,DBLP:conf/nips/Tian0PKSI20}. 
Existing works in time series contrastive learning often apply carefully hand-picked \revise{time-domain} transformations such as \texttt{jitter-and-scale} and \texttt{permutation-and-jitter}~\citep{DBLP:journals/pami/EldeleRCWKLG23}.  However, these augmentations are mostly adopted from the vision domain and do not take the intrinsic characteristics of time series into consideration. 
Due to the unintuitive nature of time series, it becomes impractical to painlessly figure out semantically compromised augmented samples, unlike in vision. 
\revise{Additionally, predefined stochastic frequency-domain augmentations, such as \texttt{phase-shift} and augmentation in TF-C~\cite{zhang2022self}, introduce semantically irrelevant noise that disrupts the critical information. 
Moreover, frequency-based predefined augmentations, such as high-pass and low-pass filters, require prior knowledge, such as the effective bandwidth of a dataset, to determine the selection of appropriate augmentation functions. As prior knowledge is not always accessible in the contrastive learning paradigm, and the compromised semantics caused by the frequency-domain augmentation, predefined frequency-domain augmentations remain suboptimal for contrastive learning. }
Consequently, when applying predefined augmentation, the type and degree of transformation need to be carefully selected to reduce the loss of semantic information. \revise{However, the reliance on trial-and-error selection of hand-picked augmentations makes the process computationally expensive and impractical. }
What's worse is that there is no single augmentation function that consistently performs well on all diverse datasets~\citep{DBLP:conf/kdd/QianTM22}. 
Given these challenges, recent works have started to explore the generalized principle and design of transformation that yield universally optimal augmentation $\textsf{v}^{\ast}$ for time series contrastive learning. For instance, the latest InfoTS~\citep{luo2023time} and AutoTCL~\citep{DBLP:journals/corr/abs-2402-10434} share a common principle that optimal augmentation should remain semantically consistent with their anchor samples $\text{MI}(\textsf{v}^{\ast}; \textsf{y})=\text{MI}(\textsf{x}; \textsf{y})$, where $\textsf{x}$ and $\textsf{y}$ are the random variable denoting time series sample and \revise{\textit{global} label applied to the entire sequence}, and $\text{MI}(\cdot; \cdot)$ represents the mutual information (MI) quantifying the mutual dependence between two variables. MI is formally defined as $\text{MI}(\textsf{x}; \textsf{y})=\text{H}(\textsf{y})-\text{H}(\textsf{y}|\textsf{x})$, where $\text{H}(\textsf{y})$ and $\text{H}(\textsf{y}|\textsf{x})$ denote the Shannon entropy of $\textsf{y}$ and the entropy of $\textsf{y}$ conditioned on $\textsf{x}$, respectively. However, our empirical observations reveal that these proposed augmentation strategies still fail to preserve semantic integrity. To be more specific, they more or less undermine or disrupt the meaningful patterns associated with \revise{semantically relevant information}, i.e., $\text{MI}(\textsf{v}^{\ast}; \textsf{y})$, of the time series, which will be discussed in detail in the later sections. 
In our analysis, the global semantics, whose amount is quantified as $\text{MI}(\textsf{x}; \textsf{y})$, refer to \textit{the information that spans the entire time series and contributes significantly to distinguishing between different classes}. Therefore, \revise{ensuring the preservation of global semantics is essential for effective view generation in contrastive learning for time series classification tasks.}

For a clearer illustration, we plot 6 different $\text{MI}(\textsf{v}; \textsf{y})$ curves in Figure~\ref{fig:MI}, where $\textsf{v} \in \{\mathcal{A}_\mathbf{s}(\textsf{x}), \textsf{x}, \mathcal{A}_\text{AutoTCL}(\textsf{x}), \mathcal{A}_\text{InfoTS}(\textsf{x}), \mathcal{T}_T(\textsf{x}), \mathcal{T}_F(\textsf{x})\}$, denoting the augmented view generated by our proposed FreRA, identity transformation, AutoTCL, InfoTS and \texttt{jitter-and-scale} and \texttt{amplitude-and-phase-perturbation}, respectively, and $\textsf{y}$ is the label in the downstream classification task. The $x$-axis presents the timestamp of the time series, and the $y$-axis denotes the value of MI. 
We provide more details regarding Figure~\ref{fig:MI} in Appendix~\ref{sec:explain-fig1}. Intuitively, a higher MI curve is preferable because it indicates more global semantics information is preserved. We first observe the proposed FreRA (blue curve) achieves the highest value among all the curves, and it almost overlaps with $\text{MI}(\textsf{x}; \textsf{y})$ (orange curve), indicating FreRA preserves all the \revise{semantically relevant} information in the generated views and there is no major loss of critical information. 
We then observe that the other four curves are consistently lower than the first two curves, indicating \revise{the semantically irrelevant information prevails} and the global semantics are undermined in the latter three transformations, which agrees with our earlier analysis. Previous work~\citep{xu2024self} figures out that undermined semantics in the views cause degraded representation and harm the performance of downstream tasks, which is undesirable.

Despite strong empirical performance on certain datasets, existing augmentations \revise{introduce semantically irrelevant information which undermines semantically relevant context}. This limitation highlights their inherent weaknesses. Due to the \textit{inter-correlation among timestamps}, time-domain manipulations \textit{fail to keep the critical global information intact while introducing variation}.
\revise{Existing frequency-based augmentations do not fully leverage the advantages of the frequency domain. To address this gap, we reanalyze the frequency domain} and identify 3 advantages of it over the time domain: 
1) \textit{global}: each frequency component encapsulates a global feature that spans all timestamps and is more meaningful in revealing the global semantics for classification tasks; 2) \textit{independent}: the orthogonal Fourier basis ensures the independence among frequency components, making it unlikely to contain both critical and noisy information at the same time, which allows clear separation and independent manipulations on different components; and 3) \textit{compact}: given the first two properties, there is a compactly distributed set of frequency components that can well preserve the semantic integrity, \revise{which reduces the risk of information loss while maintaining semantically relevant context}. The three advantages of the frequency domain and how they facilitate the design of FreRA will be elaborated in detail in the latter sections. 

To fully tap into the great potential of the frequency domain, we propose a novel \textbf{Fre}quency-\textbf{R}efined \textbf{A}ugmentation (\textbf{FreRA}) for contrastive learning on time series classification. The central idea of FreRA is to adaptively refine frequency components. Specifically, we learn a lightweight trainable parameter vector to capture the inherent semantic distribution in the frequency domain. \revise{Semantic-aware} identity modification and \revise{semantic-agnostic} self-adaptive modification are then proposed to the well-separated critical and unimportant frequency components respectively, to preserve global semantics and infuse variance. This single-parameter vector adeptly guides the refinement in both the separation and modifications. FreRA is a generalized transformation that automatically adapts to training data, alleviating manual efforts in adjusting augmentations. 
It also ensures that the added variation does not compromise the global semantics by refining the frequency domain rather than the time domain.
FreRA can be easily adapted to a wide range of contrastive learning models in a plug-and-play manner.
In summary, our main contributions are:
\begin{itemize} [leftmargin=*]
\item We identify three advantages of the frequency domain and introduce the novel frequency perspective to automatic view generation for time series contrastive learning for the first time.
\item Building upon these advantages, we design a lightweight and unified automatic augmentation FreRA for contrastive representation learning on classification tasks, which can be applied in a plug-and-play manner and jointly optimized with the contrastive learning model.
\item Extensive experiments on 135 benchmark datasets demonstrate the competitive performance of FreRA in contrastive learning and improved generalization in transfer learning scenarios on both time series classification and anomaly detection tasks.
\end{itemize}

\section{Related Work}
\noindent\textbf{Time series contrastive learning.}
Considering the challenges of data annotation for time series, contrastive learning achieves great success in time series applications~\citep{yue2022ts2vec,DBLP:conf/iclr/TonekaboniEG21,DBLP:journals/pami/EldeleRCWKLG23,meng2023mhccl}. TS2Vec~\citep{yue2022ts2vec} performs hierarchical contrastive learning to learn timestamp-wise representations. TNC~\citep{DBLP:conf/iclr/TonekaboniEG21} learns temporal representations where neighboring and non-neighboring signals are distinguishable. TS-TCC~\citep{DBLP:journals/pami/EldeleRCWKLG23} proposes a novel cross-view prediction task. MHCCL~\citep{meng2023mhccl} utilizes hierarchical clustering for temporal contrastive representation learning.
Although previous works introduce various architectures and objectives, the essence of contrastive learning lies in the attraction of positive pairs and the repulsion of negative pairs~\citep{he2020momentum}, making view generation a crucial component. 

\noindent\textbf{Frequency domain of time series.}
The frequency domain mostly serves as a substitute or supplementary modality in multiple time-series tasks, e.g., representation learning~\citep{yang2022unsupervised}, domain generalization~\citep{zhang2022self}, and time series forecasting~\citep{DBLP:conf/nips/ZhouMWW0YY022,DBLP:conf/icml/ZhouMWW0022, DBLP:conf/nips/YiZFWWHALCN23}. Those works empirically discover and exploit the frequency domain as an informative element: BTSF and TF-C~\citep{yang2022unsupervised,zhang2022self} encourage time-frequency consistency in representation learning to enhance generalization; \citet{DBLP:conf/nips/ZhouMWW0YY022} claim that utilizing low-frequency Fourier components for time series forecasting could undermine noise; \citet{DBLP:conf/icml/ZhouMWW0022} prove that a subset of randomly selected Fourier components preserves most of the information in the time series. \citet{DBLP:conf/nips/YiZFWWHALCN23} find that the frequency domain possessed a global view and compact energy in MLP-based time series forecasting. Those works provide heterogeneous understandings of identifying essential information in the frequency domain, either from domain knowledge or heuristics. 
In contrast, our motivation inspires a unified approach that manipulates frequency-domain information to facilitate contrastive learning. 
\revise{While predefined frequency-domain augmentations have been applied in time series contrastive learning~\citep{DBLP:conf/kdd/QianTM22, zhang2022self}, the full potential of the frequency domain remains underutilized, leaving room for a more strategic augmentation approach. }

\begin{figure*}[!t]
    \centering
    \includegraphics[width=0.85\linewidth]{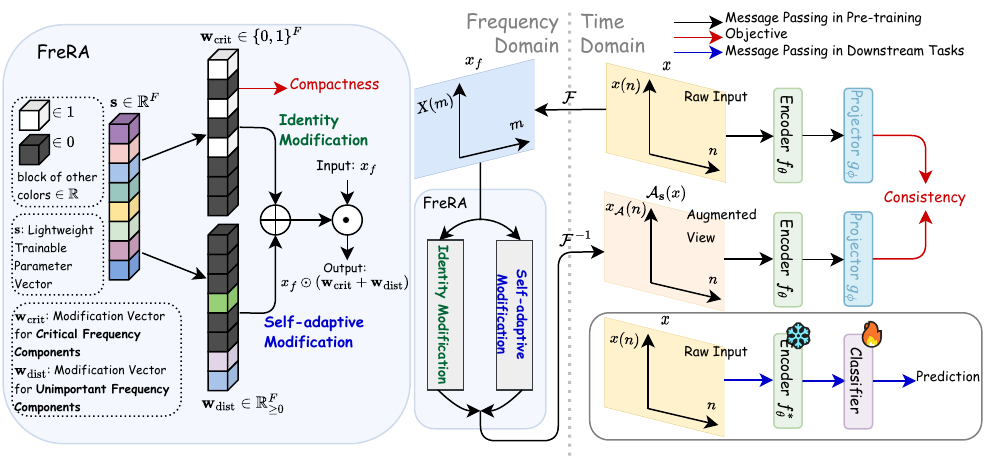}
    \caption{An overview of the proposed FreRA. The left-hand side presents the detailed design of FreRA: \revise{semantic-aware} identity modification on critical components and \revise{semantic-agnostic} self-adaptive modification on unimportant components are conducted in the frequency domain to maintain contextual information and infuse variance respectively. The matching colors between $\mathbf{s}$ and $\mathbf{w}_\text{dist}$ on unimportant components intend to illustrate the adaptive distortion. The independent manipulations in FreRA ensure the added variance does not impact the critical \revise{semantically relevant} information. \revise{$X(m)$ and $x(n)$ represent the frequency domain and the time domain of time series respectively, where $m$ denotes the index of frequency component and $n$ denotes the timestamp index.} As a plug-and-play component, FreRA can be jointly trained with any contrastive learning framework, as illustrated on the right-hand side. The contrastive learning model is pre-trained in the time domain. FreRA encourages the compactness of critical frequency components and the consistency of positive pairs' representations. In evaluation, a classifier is trained on top of the frozen pre-trained encoder to get predictions for downstream tasks. }
    \label{fig:FreRA-overview}
    \vspace{-10pt}
\end{figure*}

\noindent\textbf{Augmentations for contrastive learning.}
As a crucial component for contrastive learning, augmentation functions are either carefully designed or selected from grid search~\citep{DBLP:conf/kdd/QianTM22,DBLP:journals/pami/EldeleRCWKLG23}. The former requires domain knowledge, while the latter is computationally inefficient. There is no single existing augmentation function enjoying universal optimal performance~\citep{DBLP:conf/kdd/QianTM22}. The selection is task-dependent~\citep{DBLP:conf/nips/Tian0PKSI20} and subject to data modality~\citep{jaiswal2020survey}. Some works try to automate the selection from a predefined set of transformations or adapt a well-defined transformation to serve contrastive learning: 
InfoTS~\citep{luo2023time} trains a data-driven probabilistic augmentation selector that intends to encourage high fidelity and variety to select optimal augmentation. 
\citet{demirel2024finding} introduce tailored mixup for non-stationary quasi-periodic time series.
Another line of work eliminates the use of hand-designed augmentation: InfoMin~\citep{DBLP:conf/nips/Tian0PKSI20} generates contrastive views with a flow-based model, guided by the adversarial InfoMin objective. 
AutoTCL~\citep{DBLP:journals/corr/abs-2402-10434} factorizes the time series instance to informative and noisy parts by timestamps. 
Self-adaptive augmentation in the frequency domain is less explored in contrastive learning, and we fill this research gap in this work. 

\section{Methodology}




\subsection{Problem Statement}
Let $x=[x^1, x^2, ..., x^L]^T \in \mathbb{R}^{L \times D}$ denote an unlabeled time series instance that lasts for $L$ timestamps and has $D$ channels where the signal at $i$-th timestamp $x^i \in \mathbb{R}^{D}, \forall i \in [1,..., L]$. We do not make assumptions about the dimension or length of the time series. Our problem definition is valid for both univariate and multivariate time series datasets of varying scales. $\mathcal{F}(\cdot)$ and $\mathcal{F}^{-1}(\cdot)$ represent the Fourier transform and its inverse, respectively. We denote $x_f = [x_f^1, x_f^2, ..., x_f^F]^T=\mathcal{F}(x) \in \mathbb{C}^{F \times D}$ as the Fourier transform of $x$, i.e., $x = \mathcal{F}^{-1}(\mathcal{F}(x))$, where $\mathbb{C}$ stands for the complex space and $F=\lfloor{L/2}\rfloor+1$ is the number of frequency components. $x_f^1$ and $x_f^F$ embed the characteristics of the lowest and highest Fourier frequency basis functions, respectively.

In the general contrastive learning framework, an encoder $f_\theta$ is trained to map input samples to a latent space where the downstream task is performed. Taking SimCLR~\citep{DBLP:conf/icml/ChenK0H20} as our contrastive learning framework, it appends a projector $g_\phi$ to the encoder. $\theta$ and $\phi$ denote the sets of trainable parameters in the encoder and projector respectively. 
In the mini-batch $X \in \mathbb{R}^{B \times L \times D}$ containing $B$ instances, each anchor $x \in X$ associates with its augmented view $\mathcal{A}(x)$ as a positive pair, and with the other $(B-1)$ samples to form negative pairs.
We consider the batch-wise contrastive loss as: $\mathcal{L}_{\text{CL}} = \mathcal{L}(X; \mathcal{A}(\cdot), f_\theta, g_\phi)$, which will be elaborated later. 

It is a common belief in existing works~\citep{DBLP:conf/nips/Tian0PKSI20,luo2023time,DBLP:journals/corr/abs-2402-10434} that the optimal view generator for contrastive learning is defined as follows.
\begin{definition}[\textbf{Optimal View Generator}]
Given the random variable $\textsf{x}$ denoting the input instances, its optimal view generator $A^\ast(\cdot)$ should satisfy $\mathcal{A}^\ast(\textsf{x}) = \underset{\mathcal{A}}{\arg\min} \text{ MI}(\mathcal{A}(\textsf{x}); \textsf{x})$, subject to $\text{MI}(\mathcal{A}^\ast(\textsf{x}); \textsf{y}) = \text{MI}(\textsf{x}; \textsf{y})$.
\label{definition:optimal-view}
\end{definition}
Based on the definition, an optimal view generator should preserve the \textit{minimal but sufficient} information with respect to the global semantics of its input.
Existing works on time series contrastive learning mainly select an empirically optimal augmentation function $\mathcal{T}^{\ast}(\cdot)$ from a set of predefined transformations $\{\mathcal{T}_1(\cdot), \mathcal{T}_2(\cdot), ..., \mathcal{T}_m(\cdot)\}$, such as \{\texttt{scaling}, \texttt{jittering}, \texttt{rotation}\}, i.e., 
\begin{equation}
    \mathcal{T}^{\ast}(\cdot), \theta^{\ast}, \phi^{\ast} = \underset{\mathcal{T}_i(\cdot) \in \{\mathcal{T}_1(\cdot), \mathcal{T}_2(\cdot), ..., \mathcal{T}_m(\cdot)\}, \theta, \phi}{\arg\min}\mathcal{L}(X; \mathcal{T}_i(\cdot), f_\theta, g_\phi).
\end{equation}
Selected from the painstaking trials and errors, $\mathcal{T}^{\ast}(\cdot)$ still suffers from loss of semantic information. 
Other works utilize a trainable network to model the transformation function, denoted as $\mathcal{T}(\cdot; \gamma)$, where $\gamma$ is the parameters of the transformation network. They optimize the entire framework as follows: 
\begin{equation}
    \gamma^{\ast}, \theta^{\ast}, \phi^{\ast} = \underset{\gamma}{\arg\max} \text{ } \underset{\theta, \phi}{\arg\min} \mathcal{L}(X; \mathcal{T}(\cdot; \gamma), f_\theta, g_\phi) + \mathcal{L}_\text{auxiliary}(\gamma),
\end{equation}
where $\mathcal{T}(\cdot; \gamma^\ast)$ denotes the learned transformation function and $\mathcal{L}_\text{auxiliary}(\gamma)$ is the extra regularization on the transformation network. The optimization for the min-max objective is done through an alternative update of the transformation network and the contrastive learning model.

Aware of the selection cost, compromised semantics, and the complex alternative optimization in previous approaches, we aim to develop a semantic-preserving automatic augmentation $\mathcal{A}(\cdot)$ that can be jointly optimized with the contrastive learning model, with objective formulated as follows:
\begin{equation}
\begin{split}
    & \underset{\mathcal{A}(\cdot), \theta, \phi}{\arg\min} \mathcal{L}(X; \mathcal{A}(\cdot), \theta, \phi) + \mathcal{L}_\text{auxiliary}(\mathcal{A}(\cdot)) \\
    & \text{subject to } \text{MI}(\mathcal{A}(\textsf{x}); \textsf{y}) = \text{MI}(\textsf{x}; \textsf{y}).
\end{split}
\label{Equation:general-obj}
\end{equation}
In the following section, we will present the design of $\mathcal{A}(\cdot)$ and prove the fulfillment of Definition~\ref{definition:optimal-view} as well as the criterion in the objective. 


\subsection{Overview of FreRA}
It is a common belief that a good view in contrastive learning should contain both semantic-preserving information and a considerable amount of variance~\citep{DBLP:journals/corr/abs-2402-10434,luo2023time}. The former ensures strong performance on downstream tasks, while the latter encourages the encoder to learn generalizable representations.
To achieve such a good view, we leverage the global, independent, and compact properties of the frequency domain to design the frequency-refined augmentation, FreRA as follows:
\begin{equation}
    \mathcal{A}_\mathbf{s}(x) = \mathcal{F}^{-1}(\underbrace{\overbrace{\mathbf{w}_{\text{crit}} \odot x_f}^\text{global and compact} + \mathbf{w}_\text{{dist}} \odot x_f}_\text{independent}) \in \mathbb{R}^{L \times D},
\end{equation}
\sloppy where $\odot$ denotes elementwise multiplication and $\mathbf{s}$ is the lightweight trainable parameter of FreRA. Specifically, $\mathbf{w}_{\text{crit}} = [w_{\text{crit}}^1, w_{\text{crit}}^2, ..., w_{\text{crit}}^F] \in \{0, 1\}^F$ applies \revise{semantic-aware} identity modification on those identified critical frequency components to preserve the essential \revise{semantically relevant} information, while $\mathbf{w}_\text{{dist}} \in \mathbb{R}_{\ge 0}^F$ applies \revise{semantic-agnostic} self-adaptive modification to the unimportant frequency components to introduce diverse distortion. The two modifications are applied independently to keep the critical global information intact while introducing variance. There may exist certain component $x_f^i$ whose $w_{\text{crit}}^i$ and $w_{\text{dist}}^i$ are both 0. The refinement, including the component separation and modifications, is guided by a single vector $\mathbf{s}$. Figure~\ref{fig:FreRA-overview} depicts an overview of the proposed FreRA. Further details are elaborated in Section~\ref{sec:frera_ddetail}.

\subsubsection{Why FreRA Makes Good Views?}
In this section, we elaborate the three advantages of the frequency domain over the time domain and elaborate on them in detail. Based on them, we explain why the frequency-domain refinement produces good views that benefit contrastive representation learning for downstream tasks. To facilitate our analysis, we introduce a new set of notation for time-domain data $x(n)$, augmented view $x_\mathcal{A}(n)$, and frequency-domain data $X(m)$ as follows:
\begin{equation}
\resizebox{0.46\textwidth}{!}{$
\begin{aligned}
    x(n) & = x^{n+1}, \quad x_\mathcal{A}(n) = \mathcal{A}_\mathbf{s}(x^{n+1}) \quad \text{ for } n \in \{0, 1, ..., L-1\},  \\
    X(m) & = \begin{cases}
    x_f^{m+1}, & \text{if $m+1 \le F$}\\
    \overline{X(L - m)} =  \overline{x_f^{L-m+1}}, & \text{otherwise},
  \end{cases} \text{ for } m \in \{0, 1, ..., L-1\}\\
\end{aligned}
$}
\label{Equation:annotation-conv}
\end{equation}
where $\overline{x_f^{L-m+1}}$ is the conjugate of $x_f^{L-m+1}$, $x = [x(0), x(1), ..., x(L-1)]^T$ and $x_f = [X(0), X(1), ..., X(F-1)]^T$. We present the derivation for the second condition of $X(m)$ in Appendix~\ref{sec:conj-sym}. 

\noindent\textbf{Global.}
The Fourier component is derived by Discrete Fourier Transform (DFT)~\citep{sundararajan2001discrete}: $X(m)=\sum_{n=0}^{L-1} x(n) e^{-\frac{2 \pi i}{L} m n}$, where each frequency component $X(m)$ encodes all the timestamps.
According to the Dual convolution theorem~\citep{sundararajan2001discrete}, element-wise multiplication in the frequency domain is equivalent to circular convolution in the time domain. Then we have $\mathcal{F}(\tilde{\mathbf{w}} \ast x) = \frac{1}{F} \mathcal{F}(\tilde{\mathbf{w}}) \odot \mathcal{F}(x)$,
where $\ast$ denotes the circular convolution operator. Let $\mathcal{F}(\tilde{\mathbf{w}}) = \mathbf{w}_{\text{crit}}$, we can conclude that the frequency modification is equivalent to time-domain convolution with kernel $\tilde{\mathbf{w}} = \mathcal{F}^{-1}(\frac{1}{F}\mathbf{w}_{\text{crit}}) \in \mathbb{C}^L$, which has global receptive field on $x$. This global perspective is crucial to the time series classification tasks, as it \textit{preserves global semantics across the entire time series and ensures that all timestamps are altered with distortion applied only to unimportant components}.

\noindent\textbf{Independent.}
The inverse DFT, $x(n)=\frac{1}{L}\sum_{m=0}^{L-1} X(m) e^{\frac{2 \pi i}{L} m n}$, offers an alternative perspective of interpreting the frequency components: they are the coefficients of the orthogonal decomposition of the time domain. The decomposition basis for $X(m)$ is $\mathbf{u}_m = [e^{\frac{2 \pi i}{L} m n} | n=0, 1, ..., L-1]^T \in \mathbb{C}^L$. We have $\langle \mathbf{u}_m, \mathbf{u}_q \rangle = 0$ if $m \neq q$, where $\langle \mathbf{u}, \mathbf{v} \rangle = \mathbf{u}^T \overline{\mathbf{v}} \in \mathbb{C}$ is the Hermitian inner product. The proof is presented in Appendix~\ref{sec:orthogonal}. The zero-valued Hermitian inner product confirms the orthogonal nature of the decomposition basis. Each coefficient $X(m)$ measures the contribution of its corresponding basis function independently.
Similarly, when FreRA modifies frequency components, 
each modified components independently contribute to the augmented views without being affected by the others. The independence \textit{makes it easy to isolate critical and unimportant information} by updating $\mathbf{w}_{\text{crit}}$ and $\mathbf{w}_\text{{dist}}$ and \textit{prevent added variance from degrading critical information}.


\noindent\textbf{Compact.}
Parseval's theorem~\citep{parseval1806memoire} states that the total energy of the signal in the time domain is equal to the average energy in the frequency domain, i.e.,  $\sum_{n=0}^{L-1} \lvert x(n)\rvert^2=\frac{1}{L} \sum_{m=0}^{L-1} \lvert X(m) \rvert ^2.$
This implies that if most energy is concentrated in a small number of frequency components, the information of the signal is compactly distributed in the frequency domain.
Figure~\ref{fig:energy} in the Appendix validates this interpretation by showing that most energy is concentrated on the first ten frequency components for the UCIHAR dataset and the same principle holds for other datasets. 
This aligns with our common sense that many natural or man-made processes recorded as time series encode information in low-frequency components. However, for classification tasks, the semantically relevant bandwidth is normally unknown and the importance of these components varies, making it hard to automatically rank their contributions and identify critical ones. Moreover, the exception happens in certain applications, such as audio processing~\citep{virtanen2015compositional}, where both low- and high-frequency components matter. 
As critical components that encapsulate the semantically relevant information of the signal are likely a subset of the compactly distributed informative components, their distribution should also remain compact. 
This leads us to \textit{enforce compactness in identifying the critical components in the frequency domain}. Notably, the range of the energy in Figure~\ref{fig:energy} highlights the shared distribution of the informative components in the frequency domain. It advocates that a single vector $\mathbf{s}$ is sufficient to work across all the samples in the dataset.

Lastly, we demonstrate that FreRA preserves global semantics of the time series, i.e., $\text{MI}(\mathcal{A}_\mathbf{s}(\textsf{x}) ; \textsf{y}) = \text{MI}(\textsf{x}; \textsf{y})$ (Proposition~\ref{proposition:cri_sem} in the Appendix with proof) under the reliable assumption that noisy frequency components are independent to the label.
This proposition agrees with our observation in Figure~\ref{fig:MI} where the blue and orange curves nearly overlap. It also shows that FreRA satisfies the semantic-preserving constraint in Definition~\ref{definition:optimal-view}, leaving only the minimization objective for optimization.

\subsection{Time Series Contrastive Learning with FreRA}
\label{sec:frera_ddetail}
In this section, we first elaborate on the detailed design of FreRA and propose the objective that allows the joint training of FreRA and the contrastive learning framework.

\noindent\textbf{Discern the importance of frequency components.}
Both $\mathbf{w}_{\text{crit}}$ and $\mathbf{w}_\text{{dist}}$ are parameterized by a lightweight trainable vector $\mathbf{s}=[s_1, s_2, ..., s_F]^{T} \in \mathbb{R}^{F}$, where $s_i$ scores the importance of the $i$-th frequency component $x_f^i$ for the global semantics. A higher $s_i$ indicates the contextual importance of $x_f^i$. On the other hand, $s_i$ with a negative value suggests $x_f^i$ is the noise component. 

\noindent\textbf{\revise{Semantic-aware} identity modification on critical frequency components.}
A simple way to derive a binary vector like $\mathbf{w}_{\text{crit}}$ is to sample from a Bernoulli distribution controlled by the parameter $\mathbf{p}=[p_1, p_2, ..., p_F]^{T} \in \mathbb{R}^{F}$, i.e., $w_{\text{crit}}^i \sim \operatorname{Bernoulli}(p_i)$ for $i \in [1, 2, ..., F]$, where $p_i$ denotes the probability that the $i$-th frequency component is semantically critical. Meanwhile, the Bernoulli distribution is not differentiable w.r.t. $p_i$. Instead, we apply the Gumbel-Softmax reparameterization~\citep{DBLP:conf/iclr/JangGP17}, i.e.,  $w_{\text{crit}}^i = \text{Gumbel-Softmax}(p_i)$. The importance score vector $\mathbf{s}$ makes it possible because its values can be used to reflect the probability, i.e., $p_i = \sigma(s_i)$, where $\sigma(\cdot)$ is the sigmoid function. The reparameterization is formulated as follows:
\begin{equation}
    w_{\text{crit}}^i = \sigma((\log \epsilon-\log (1-\epsilon)+\log \frac{\sigma(s_i)}{1-\sigma(s_i)}) / \tau_w),
\label{fig:reparameterization}
\end{equation}
where $\epsilon \sim \operatorname{Uniform}(0,1)$ and $\tau_w$ is the temperature controlling the discretization. As $\tau_w \rightarrow 0$, $w_{\text{crit}}^i$ approximates a Bernoulli distribution: $P(w_{\text{crit}}^i \rightarrow 0) = 1-p_i  \text{ if } \epsilon > p_i$, and $P(w_{\text{crit}}^i \rightarrow 1) = p_i  \text{ if } \epsilon < p_i$. 
In this way, distinct importance score $s_i$ is assigned to $x_f^i$ to capture varying levels of contextual relevance within each frequency component. 

\noindent\textbf{\revise{Semantic-agnostic} self-adaptive modification on unimportant frequency components.}
Besides preserving contextually relevant information, a good view also requires variance to be infused. Instead of adding random noise, we deliberately modify the semantically irrelevant noisy components identified by  $\mathbf{s}$ to avoid affecting critical information. 
As the score $s_i$ indicates the importance of the $i$-th frequency component $x_f^i$ for global semantics, frequency components with smaller values are considered unimportant. A threshold value is required to separate the unimportant components from the rest and handpicking such a value would be inefficient and troublesome due to its dataset-specific nature. A practical approach is to determine the value with statistical information of the vector. In this work, we use the mean value for convenience. We empirically compare the performance using alternative thresholds in Appendix~\ref{sec:alternativeD}.
Let $D = \{i | s_i < \min(0, \frac{1}{F} \sum_{i=1}^{F}{s_i})\}$ denote the set of unimportant components' indices. 
Finding the minimum between the mean value and 0 ensures the threshold is non-positive. This is to prevent components with positive scores from being sampled. 
The distortion vector $\mathbf{w}_\text{dist} = \frac{1}{\delta_s} \mathbbm{1}_{\{i \in D\}} \odot \lvert \mathbf{s} \rvert \in \mathbb{R}_{\ge 0}^F$ modifies the unimportant frequency components to various extent. The scaling factor $\delta_s = \frac{1}{\lvert D \rvert} \sum_{i=1}^{F} \mathbbm{1}_{\{i \in D\}} \lvert s_i \rvert$ controls the degree of distortion such that it is in accordance to each component's insignificance and no dramatic interference will be introduced. Because of the absolute value function, the least important frequency component gets amplified mostly in the distortion step. Lastly, we apply stop-gradient operation, i.e., $\mathbf{w}_\text{dist} = \operatorname{stopgrad}(\mathbf{w}_\text{dist})$ because back-propagation is not desired for the distortion. Data-driven thresholding and scaling define the self-adaptive nature of modification on semantically irrelevant frequency components. By modifying these components, variance is infused into all timestamps.

\noindent\textbf{Overall objective.}
The Gumbel-Softmax reparameterization makes $\mathbf{w}_{\text{crit}}$ differentiable, which allows the joint training of automatic augmentation and the contrastive learning framework. 
Specifically, the contrastive model is optimized by pulling positive pairs together and pushing negative pairs apart through the InfoNCE loss ~\citep{DBLP:journals/corr/abs-1807-03748}, given by:
\begin{equation}
    \mathcal{L}_{\text{CL}} = -\frac{1}{B} \sum_{x \in X} \log \frac{\exp(\text{sim}(h_x, \hat{h}_x)/\tau)}{\sum_{x\prime \in X}\exp(\text{sim}(h_x, \hat{h}_{x\prime})/\tau)},
\end{equation}
where $h_x = g_\phi(f_\theta(x))$, $\hat{h}_x = g_\phi(f_\theta(\mathcal{A}_\mathbf{s}(x)))$,  sim$(\cdot,\cdot)$ denotes the similarity measurement implemented as the cosine similarity and $\tau$ is the temperature coefficient.
Minimizing the InfoNCE loss is equivalent to maximizing the lower bound $\text{MI}_\text{CL}(\textsf{x}, \mathcal{A}_\mathbf{s}(\textsf{x}))$ of the mutual information $\text{MI}(\textsf{x}, \mathcal{A}_\mathbf{s}(\textsf{x}))$~\citep{DBLP:journals/corr/abs-1807-03748}, i.e., $\text{MI}(\textsf{x}, \mathcal{A}_\mathbf{s}(\textsf{x})) \le \log(B) - \mathcal{L}_\text{CL} = \text{MI}_\text{CL}(\textsf{x}, \mathcal{A}_\mathbf{s}(\textsf{x}))$, where $B$ denotes the batch size. For $\mathcal{A}_\mathbf{s}(\cdot)$, directly applying InfoNCE results in a trivial solution of $\mathbf{s}$ that causes $\mathbf{w}_{\text{crit}}$ to become an all-one vector $\mathbf{1} \in \{1\}^F$, leaving the importance of frequency components ambiguous. This is because $x = \mathcal{F}^{-1}(\mathbf{1} \odot x_f)$. 
On the other hand, the critical components should keep and only keep the \revise{semantically relevant} information, as the name suggests, i.e., $\text{MI}(\textsf{x}_{\text{crit}}; \textsf{x}) = \text{MI}(\textsf{x}; \textsf{y})$, where $\textsf{x}_{\text{crit}} = \mathcal{F}^{-1}(\mathbf{w}_{\text{crit}} \odot \textsf{x}_f)$. Knowing that DFT is a reversible operation, we prove $\text{MI}(\textsf{x}_{\text{crit}}; \textsf{x})=\text{MI}(\mathbf{w}_{\text{crit}} \odot \textsf{x}_f; \textsf{x}_f)$ and $\text{MI}(\textsf{x}; \textsf{y})=\text{MI}(\textsf{x}_f; \textsf{y})$ in the Appendix~\ref{proof:cons-MI}. The orthogonal property of the Fourier basis reminds us that the frequency components are uncorrelated. In other words, $\text{MI}(\mathbf{w}_{\text{crit}} \odot \textsf{x}_f; \textsf{x}_f))$ keeps increasing as a higher proportion of frequency components are identified as critical ones, as illustrated in Figure~\ref{fig:MI-crit} in the Appendix, with proof provided in the Appendix~\ref{proof:mono-increase}. The trivial solution falls on the right end of the red segment and the optimal proportion of critical components is pointed by the green arrow.
To avoid the trivial solution and to achieve a good view, we regularize the proportion of critical components in complement to the InfoNCE loss. Specifically, we employ the L1-norm on $\mathbf{w}_{\text{crit}}$ as follows:
\begin{equation}
    \mathcal{L}_{\text{reg}} = \frac{1}{F}\sum_{f = 1}^{F}{\lvert w_{\text{crit}}^f \rvert}.
\end{equation}
The regularization eliminates redundancy from identifying too many critical frequency components, leading to compact selection and robust representation learning. 
The overall optimization problem is given by:
\begin{equation}
\begin{gathered}
    \mathbf{s}^\ast, \theta^\ast, \phi^\ast = 
    \underset{\mathbf{s}, \theta, \phi}{\arg\min} 
    (\mathcal{L}_{\text{CL}} + \lambda \cdot \mathcal{L}_{\text{reg}}), \\
\end{gathered}
\label{eq:overall-objective}
\end{equation}
where $\lambda$ is a hyper-parameter to balance the two losses. Note that there exists a unique value of critical component's proportion that makes $\text{MI}(\textsf{x}_{\text{crit}}; \textsf{x}) = \text{MI}(\textsf{x}; \textsf{y})$ happen, as shown in Figure~\ref{fig:MI-crit}. 
Meanwhile, as the hyper-parameter regularizes the proportion, $\lambda$ empirically exhibits stable performance over a range of values, as shown in Figure~\ref{fig:param}.

\noindent\textbf{How does the learning objective benefit view generation?}
\revise{Refining frequency components offers a key advantage in preserving global semantics, as each frequency component inherently encodes global information that spans the entire time series. Unlike time-domain augmentations, which disrupt inter-correlations among timestamps and compromise global semantics, FreRA leverages the global property of the frequency domain and independently modifies critical and unimportant components. This ensures that FreRA preserves global semantics while introducing proper variance. However, refining frequency components is inherently challenging due to the absence of ground truth labels, making it impractical to directly identify the optimal set of semantically relevant frequency components. Instead, FreRA relies on the contrastive learning objective to learn how to refine frequency components in a data-driven manner, effectively distinguishing between critical and unimportant information. }
From information theory, optimizing Eq.~\ref{eq:overall-objective} is equivalent to maximize the lower bound for $\text{MI}(\textsf{x}, \textsf{x}_\text{crit})$ and minimize $\text{MI}(\mathcal{A}_\mathbf{s}(\textsf{x}), \textsf{x})$. The former occurs because optimizing $\mathbf{s}$ over the InfoNCE loss only maximizes the lower bound for $\text{MI}(\textsf{x}, \textsf{x}_\text{crit})$, due to the stop-gradient operation applied to the unimportant frequency component. The latter is achieved by the regularization term and the distortion applied to unimportant components. Combined with the Proposition~\ref{proposition:cri_sem} we have proved earlier, we prove the view generator trained on objective in Eq.~\ref{eq:overall-objective} satisfies the optimality as defined in Definition~\ref{definition:optimal-view}. 
\revise{Empirically, our results demonstrate that our proposed frequency-domain refinement significantly outperforms random sampling-based frequency modifications, validating the effectiveness of our approach in preserving global semantics while introducing meaningful variations.}

\begin{table*}[!t]
\caption{The overall performance on all the datasets (unit: \%). best(T) and best(F) record the highest performances among the selected sets of 11 time-domain augmentations and 5 frequency-domain augmentations. The best performance is highlighted in \textbf{bold}, and the second-best performance is {\ul underlined}. $^\ast$ indicates FreRA significantly outperforms both best(T) and best(F) at the confidence level of 0.05 from paired t-test.}
\label{Table:result-overall}
\centering
\footnotesize
\begin{tabular}{c|c|ccccccccccc}
\toprule
Dataset                       & Metrics   & \begin{tabular}[c]{@{}c@{}}FreRA\\  \textbf{(ours)}\end{tabular}          & best(T) & best(F) & InfoMin$^+$ & InfoTS & AutoTCL & TS2Vec               & TNC   & TS-TCC & TF-C & SoftCLT  \\ \midrule
UCIHAR                        & ACC       & \textbf{0.975}$^\ast$ & 0.959   & 0.960   & {\ul 0.967}   & {\ul 0.967}  & 0.697   & 0.959                & 0.568 & 0.924  & 0.875 & 0.961\\\midrule
MS                            & ACC       & \textbf{0.982}$^\ast$ & 0.956   & 0.970   & {\ul 0.971}   & 0.967  & 0.691   & 0.945 & 0.526 & 0.915  & 0.811 & 0.962\\\midrule
WISDM                         & ACC       & \textbf{0.972}$^\ast$ & 0.942   & 0.950   & {\ul 0.959}   & 0.915  & 0.760   & 0.939 & 0.543 & 0.889  & 0.839 & 0.952\\\midrule
\multirow{2}{*}{\begin{tabular}[c]{@{}c@{}}UEA\\  Archive\end{tabular}} & Avg. ACC  & \textbf{0.754}$^\ast$ & 0.684          & 0.686          & 0.693          & 0.714          &  0.742          & 0.704          & 0.670          & 0.668          & 0.298 & {\ul0.751}\tablefootnote{The result is directly adopted from its original paper. As the results across all the datasets in the UEA and UCR archives are not provided, the ranking is not available.\label{reference footnote}}\\
                              & Avg. RANK & \textbf{2.133} & 5.967          & 5.800          & 5.500          & 3.967          & {\ul 2.600}          & 4.967          & 6.433          & 6.033          & 9.276 & -\\ \midrule
\multirow{2}{*}{\begin{tabular}[c]{@{}c@{}}UCR\\  Archive\end{tabular}}  & Avg. ACC  & \textbf{0.850}$^\ast$ & 0.723   & 0.744   & 0.718   & {\ul 0.849}    & 0.598   & 0.845  & 0.776 & 0.780  & 0.542 & \textbf{0.850}\textsuperscript{\getrefnumber{reference footnote}}\\
                              & Avg. RANK & {\ul 1.940}    & 6.320   & 5.750   & 6.470   & \textbf{1.930} & 8.420   & 2.670  & 4.810 & 4.670  & 8.330 & -\\                           \bottomrule
                             
\end{tabular}
\end{table*}

\noindent\textbf{Distinction to existing automatic augmentation for time series contrastive learning.}

At first glance, our method may seem to resemble InfoTS~\citep{luo2023time}, since it also leverages the same reparameterization trick to facilitate the view generation. However, their $p_i$ indicates the probability of sampling a predefined transformation$ \mathcal{T}_i(\cdot)$, i.e., $\mathcal{A}_{\text{InfoTS}}(x) = \frac{1}{m} \sum_{i=1}^{m}\text{Gumbel-Softmax}(p_i) \mathcal{T}_i(x)$.It fails to handle the noise and artifacts introduced by predefined augmentations $\mathcal{T}_i(\cdot)$. On the contrary, our approach elegantly eliminates the dependency on $\mathcal{T}_i(\cdot)$ by preserving critical elements and modifying the noise elements in the frequency domain. This more effectively enables preserving contextual-related information in the generated views while infusing variance. 
FreRA also appears similar to AutoTCL~\citep{DBLP:journals/corr/abs-2402-10434} in the sense that it disentangles the informative information of the time series from the noisy ones. However, performing the disentanglement on the time domain disrupts the periodicity and inter-dependencies among timestamps in the real world and hinders the semantics from the input space. Conversely, we disentangle the information in the frequency domain and leverage its advantages over the time domain: global, independent, and compact, to better facilitate the view generation. 

\section{Experiments}


\noindent\textbf{Datasets.}
To fully evaluate the model performance under different scenarios, we conduct extensive experiments on: (1) 3 large-scale datasets on HAR: UCIHAR~\citep{anguita2012human}, MotionSense (MS)~\citep{DBLP:conf/iotdi/MalekzadehCCH19}, and WISDM~\citep{DBLP:journals/sigkdd/KwapiszWM10}; (2) the UEA archive~\citep{bagnall2018uea}: 30 multivariate time series datasets from various applications such as Human Activity Recognition (HAR), Motion classification, ECG classification, EEG/MEG classification, Audio Spectra Classification and so on; (3) the UCR archive~\citep{dau2019ucr}: 100 univariate time series datasets collected from real-world scenarios; (4) a large-scale anomaly detection dataset: Fault Diagnosis (FD)~\citep{lessmeier2016condition} aiming to detect and classify bearing damages from single-channel current signals of electric motors; (5) a large-scale HAR dataset for transfer learning scenario: SHAR~\citep{DBLP:journals/corr/MicucciMN16}, which contains daily activity signals from 30 persons and is empirically observed to have large distribution gap among individuals~\citep{DBLP:conf/kdd/QianTM22}.

\noindent\textbf{Baselines.}
We compare FreRA against the following related baselines:
(1) 11 commonly-used handcrafted time-domain (T) augmentations~\citep{DBLP:conf/kdd/QianTM22}, including jitter, scaling, negation, permutation, shuffling, time-flipping, time-warping, resampling, rotation, permutation-and-jitter, jitter-and-scale;
(2) 5 handcrafted frequency-domain (F) augmentations~\citep{DBLP:conf/kdd/QianTM22}, including low-pass filter, high-pass filter, phase shift, amplitude and phase perturbation (fully), and amplitude and phase perturbation (partially);
(3) 3 SOTA automatic augmentation for contrastive learning: InfoMin~\citep{DBLP:conf/nips/Tian0PKSI20}, InfoTS~\citep{luo2023time}, and AutoTCL~\citep{DBLP:journals/corr/abs-2402-10434};
(4) 5 SOTA time series contrastive learning frameworks: TS2Vec~\citep{yue2022ts2vec}, TNC~\citep{DBLP:conf/iclr/TonekaboniEG21}, TS-TCC~\citep{DBLP:journals/pami/EldeleRCWKLG23}, TF-C~\citep{zhang2022self} and SoftCLT~\citep{lee2023soft}.

\noindent\textbf{Implementation details.}
We use the predefined train-validation-test split if the dataset includes such information. Otherwise, we split each dataset with a ratio of 64\%:16\%:20\%. For time-series classification datasets with class imbalance issues, we sample training instances with probabilities inversely proportional to their class sizes. We implement FreRA in PyTorch~\citep{paszke2019pytorch} and conduct all experiments on an NVIDIA GeForce RTX 3090 GPU with 25 GB memory. Additional implementation details are included in Appendix~\ref{sec:implementation}.

\begin{table*}[!ht]
\caption{Performance on anomaly detection task on the Fault Diagnosis dataset. Each row corresponds to a setting where the pre-training set includes domains $\{\text{a, bd, c}\} \setminus \text{Target Domain}$, and the Target Domain is used for evaluation. 
The best accuracy is highlighted in \textbf{bold}, and the second-best performance is {\ul underlined}.}
\label{Table:anomaly}
\centering
\footnotesize
\begin{tabular}{c|c|ccccccccccc}
\toprule
\begin{tabular}[c]{@{}c@{}}Target\\  Domain\end{tabular}       & Metrics  & \begin{tabular}[c]{@{}c@{}}FreRA\\  \textbf{(ours)}\end{tabular}          & best(T)        & best(F) & InfoMin$^+$        & InfoTS & AutoTCL & TS2Vec & TNC   & TS-TCC & TF-C & SoftCLT  \\\midrule
\multirow{2}{*}{a}  & ACC      & \textbf{0.620} & 0.574          & 0.519   & {\ul0.613} & 0.461  & 0.496   & 0.468  & 0.440 & 0.296  & 0.455 & 0.608\\
                    & Macro-F1 & \textbf{0.671} & 0.638          & 0.508   & {\ul0.644} & 0.485  & 0.484   & 0.468  & 0.302 & 0.353  & 0.208 & 0.639\\\midrule
\multirow{2}{*}{bd} & ACC      & \textbf{0.859} & {\ul0.826} & 0.767   & 0.807          & 0.731  & 0.433   & 0.802  & 0.455 & 0.823  & 0.455 & 0.808\\
                    & Macro-F1 & \textbf{0.895} & {\ul0.856} & 0.817   & 0.853          & 0.798  & 0.471   & 0.848  & 0.300 & 0.755  & 0.208 & 0.853\\\midrule
\multirow{2}{*}{c}  & ACC      & \textbf{0.819} & 0.810          & 0.736   & {\ul0.812} & 0.742  & 0.482   & 0.677  & 0.465 & 0.557  & 0.455 & 0.775\\
                    & Macro-F1 & \textbf{0.858} & 0.794          & 0.755   & {\ul0.848} & 0.781  & 0.456   & 0.747  & 0.314 & 0.617  & 0.208 & 0.825
                    \\\bottomrule
\end{tabular}
\end{table*}

\begin{table*}[!t]
\caption{Classification performance in transfer learning setting on the SHAR dataset under different numbers of source domains. No. SD denotes the number of source domains for pre-training and TD denotes the index of the target domain. The best accuracy is highlighted in \textbf{bold}, and the second-best performance is {\ul underlined}.}
\label{Table:transfer}
\centering
\footnotesize
\begin{tabular}{c|c|ccccccccccc}
\toprule
\begin{tabular}[c]{@{}c@{}}No.\\  SD\end{tabular}   & TD & \begin{tabular}[c]{@{}c@{}}FreRA\\  \textbf{(ours)}\end{tabular}          & best(T) & best(F) & InfoMin$^+$ & InfoTS & AutoTCL & TS2Vec & TNC   & TS-TCC & TF-C & SoftCLT  \\ \midrule
\multirow{4}{*}{3}  & 1             & \textbf{0.602} & {\ul 0.599}   & 0.495   & 0.537   & 0.367  & 0.464   & 0.430  & 0.133 & 0.495  & 0.349 & 0.505\\
                    & 2             & \textbf{0.467} & {\ul 0.415}   & 0.412   & 0.359   & 0.369  & 0.278   & 0.317  & 0.145 & 0.410  & 0.252 & 0.407\\
                    & 3             & \textbf{0.665} & 0.582   & {\ul 0.599}   & 0.516   & 0.516  & 0.414   & 0.523  & 0.217 & 0.464  & 0.568 & 0.530\\
                    & 5             & \textbf{0.366} & 0.332   & 0.336   & 0.359   & 0.081  & 0.245   & 0.050  & 0.143 & {\ul 0.362}  & 0.255 & 0.339\\ \midrule
\multirow{4}{*}{19} & 1             & \textbf{0.628} & 0.555   & {\ul 0.607}   & 0.542   & 0.599  & 0.497   & 0.568  & 0.117 & 0.578  & 0.453 & 0.581\\
                    & 2             & \textbf{0.652} & 0.583   & 0.571   & 0.563   & 0.455  & 0.372   & 0.640  & 0.148 & {\ul 0.647}  & 0.456 & 0.581\\
                    & 3             & \textbf{0.691} & 0.628   & {\ul 0.665}   & 0.638   & 0.563  & 0.408   & 0.502  & 0.135 & 0.592  & 0.451 & 0.559\\
                    & 5             & \textbf{0.698} & 0.617   & 0.638   & 0.601   & 0.638  & 0.430   & {\ul 0.658}  & 0.204 & 0.612  & 0.466 & 0.567\\ \bottomrule
\end{tabular}
\end{table*}

\subsection{Main Results on Time Series Classification Tasks}

The overall results on all the datasets are presented in Table~\ref{Table:result-overall}. Overall, FreRA consistently outperforms all the baselines on the three large HAR datasets and achieves the top average accuracy and ranking on both UEA and UCR archives. The detailed performances of the UEA and UCR archives are reported in Table~\ref{Table:uea} and Table~\ref{Table:ucr1} in the Appendix. The critical difference diagrams on UEA and UCR archives are presented in Figure~\ref{fig:cdd} in the Appendix. FreRA achieves the best performance on 17 out of 30 datasets in the UEA archive.
We credit the surprising performance to the frequency-refined views generated by FreRA. The empirical performance adequately illustrates that FreRA can effectively keep the semantically relevant information from critical frequency components intact while infusing variance, boosting representation learning on all datasets. FreRA achieves leading performances not only on large-scale HAR datasets but also on extremely small datasets, e.g., AtrialFibrillation, DuckDuckGeese, and StandWalkJump within the UEA archive, whose training sets contain less than 100 samples, and the improvement over the second-best baselines is up to $8.7\%$ on average. This is not only credited to the effectiveness of FreRA in maintaining semantics but also to the lightweight and scalable design where the number of parameters is only half of the sequence length. It also indicates that FreRA provides robust performance across datasets of varying sizes. Although FreRA achieves an average ranking 0.01 lower than InfoTS on the UCR archive, when comparing FreRA to the baselines of the same backbone structure, i.e. best(T), best(F) and InfoTS$^+$, the improvement brought by the augmentation itself is significantly larger than the difference between InfoTS and TS2Vec. This indicates that FreRA offers stronger enhancement regardless of the backbone. We present a detailed analysis in the Appendix~\ref{sec:main_analysis}. 

\subsection{Evaluation on Anomaly Detection Tasks}
\label{sec:anomaly}
We evaluate the performance of FreRA on the anomaly detection task using the Fault Diagnosis dataset and present the results in Table~\ref{Table:anomaly}. The signals are collected under 4 different operation settings \{a, b, c, d\}. Observing the negligible domain gap between signals from settings `b' and `d', we randomly sample half of the data from each setting and combine them as a new domain `bd'. 
Considering the highly imbalanced class distribution, we include the Macro-F1 score as another evaluation metric. FreRA outperforms all the baselines on both evaluation metrics, which demonstrates its strong performance in applications beyond classification.

\subsection{Evaluation on Transfer Learning}
Here, we evaluate the generalizability of the pre-trained encoder, which is crucial when there exists a misalignment between the per-training data and data from downstream tasks. The encoder is pre-trained on the source domains and adopted directly to an unseen target domain. Following ~\citep{DBLP:conf/kdd/QianTM22}, we conduct transfer learning in data-scarce and data-rich settings, where the number of source domains for training is 3 and 19 respectively. Table~\ref{Table:transfer} records the results from the two settings. FreRA is shown to be more effective in learning generalizable encoders than all the baselines. This is because the views emphasizing the semantic-preserving patterns guide the training of the encoder and make it sensitive to the inherent global semantic information and robust to the \revise{semantically irrelevant} information, i.e., distribution shift among different domains. Without effectively identifying critical and unimportant information, other SOTA baselines on automatic augmentation and time series contrastive learning fail to deliver promising performance in the transfer learning scenario.

\begin{table*}[!t]
\caption{The performance of the three large HAR datasets on alternative contrastive learning models. }
\label{Table:result-clmodel1}
\centering
\small
\resizebox{1.0\textwidth}{!}{
\begin{tabular}{c|cc|cc|cc|ccc|ccc|ccc}
\toprule
                          & \multicolumn{2}{c}{TS2Vec (InfoNCE)}   & \multicolumn{2}{c}{TS-TCC (InfoNCE)}   & \multicolumn{2}{c}{{ SoftCLT (InfoNCE)}}  & \multicolumn{3}{c}{{SimCLR (InfoNCE)}} & \multicolumn{3}{c}{{ SimCLR (NT-Xent)}}                                             & \multicolumn{3}{c}{BYOL (Cosine Similarity)}                                               \\
\multirow{-2}{*}{Dataset} & \begin{tabular}[c]{@{}c@{}}FreRA\\  \textbf{(ours)}\end{tabular}          & \begin{tabular}[c]{@{}c@{}}original\\  augmentation\end{tabular} & \begin{tabular}[c]{@{}c@{}}FreRA\\  \textbf{(ours)}\end{tabular}          & \begin{tabular}[c]{@{}c@{}}original\\  augmentation\end{tabular} & { \begin{tabular}[c]{@{}c@{}}FreRA\\  \textbf{(ours)}\end{tabular}}          & { \begin{tabular}[c]{@{}c@{}}original\\  augmentation\end{tabular}} 
 & { \begin{tabular}[c]{@{}c@{}}FreRA\\  \textbf{(ours)}\end{tabular}}          & { best(T)} & { best(F)} & { \begin{tabular}[c]{@{}c@{}}FreRA\\  \textbf{(ours)}\end{tabular}}          & { best(T)} & { best(F)} & \begin{tabular}[c]{@{}c@{}}FreRA\\  \textbf{(ours)}\end{tabular}              & best(T)    & best(F)\\\midrule

UCIHAR                    & \textbf{0.970} & 0.959                 & \textbf{0.944} & 0.924                 & { \textbf{0.969}} & { 0.961} & {\textbf{0.975}}  & {0.959}    & {0.960}    & { \textbf{0.972}} & { 0.951}   & { 0.955}   & \textbf{0.960}     & 0.940      & 0.937                                         \\
MS                        & \textbf{0.968} & 0.945                 & \textbf{0.959} & 0.915                 & { \textbf{0.974}} & { 0.962} & {\textbf{0.982}}  & {0.956}    & {0.970}    & { \textbf{0.979}} & { 0.969}   & { 0.965}   & \textbf{0.983}     & 0.968      & 0.954                                        \\
WISDM                     & \textbf{0.957} & 0.939                 & \textbf{0.962} & 0.889                 & { \textbf{0.956}} & { 0.952} & {\textbf{0.972}}  & {0.942}    & {0.950}    & { \textbf{0.966}} & { 0.941}   & { 0.952}   & \textbf{0.952}     & 0.942      & 0.928                      \\\bottomrule                  
\end{tabular}
}
\end{table*}

\begin{table}[!t]
\caption{Effects of the two modification modules and the L1-norm regularization of FreRA. Results are averaged over 30 datasets from the UEA archive. The number in the bracket illustrates the accuracy gap with FreRA.}
\label{Table:result-component}
\centering
\small
\begin{tabular}{l|c}
\toprule
Methods                                & Avg. ACC       \\ \midrule
FreRA \textbf{(ours)}                                 & \textbf{0.754} \\ \hline
\begin{tabular}[l]{@{}l@{}}w/o semantic-aware identity modification \\ on critical components\end{tabular} & 0.690 (-0.064) \\\hline
\begin{tabular}[l]{@{}l@{}}w/o semantic-agnostic self-adaptive modification \\ on noise components\end{tabular}    & 0.695 (-0.059) \\\hline
w/o L1-norm regularization on $\mathbf{w}_\text{crit}$ & 0.690 (-0.064) \\
\bottomrule 
\end{tabular}
\end{table}

\subsection{Ablation Studies}

\noindent\textbf{Effect of each component.}
In Table~\ref{Table:result-component}, we evaluate the effect of each component of FreRA, i.e. the \revise{semantic-aware} identity modification on critical frequency components, the \revise{semantic-agnostic} self-adaptive modification on unimportant frequency components, and the regularization term. 
To disallow the identity modification on critical components, we randomly sample a proportion of critical components instead of identifying their distribution in a data-driven way. The proportion is the same as $\mathcal{L}_{\text{reg}}$ from the last epoch of our approach to ensure fair comparison. 
To disallow the self-adaptive modification on unimportant frequency components, we set $\mathbf{w}_\text{dist}$ as an all-zero vector. 
To ignore the regularization term, we let hyper-parameter $\lambda$ be 0. 
Overall, removing any component deteriorates performance drastically. The semantic information and the distortion introduced are both crucial for downstream tasks. Between the two, the identity modification on critical components is slightly more important than the distortion on noisy components. It demonstrates the effectiveness of isolating critical and non-critical components from the frequency domain and applying the respective modifications accordingly.
Last but not least, the L1-norm regularization is as crucial as the two frequency modification modules. The result demonstrates the importance of maintaining the inherent compact distribution of critical components.

\begin{figure}[!t]
    \centering
\includegraphics[width=0.46\textwidth]{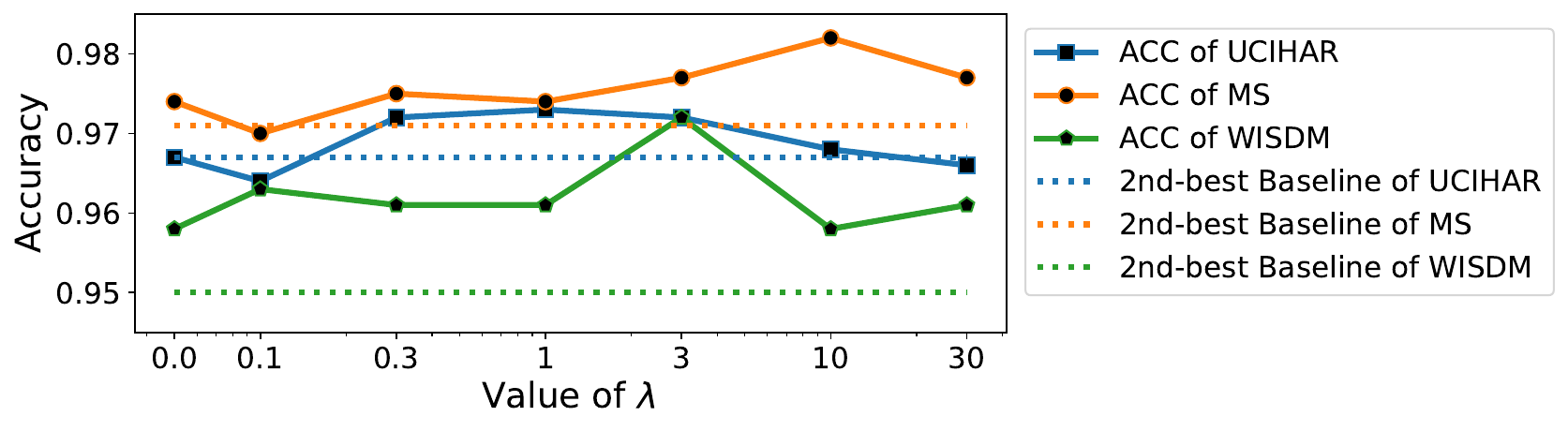}
    \caption{Performance of FreRA on the 3 HAR datasets under varying $\lambda$, in comparison to their second-best baselines. }
    \label{fig:param}
\end{figure}

\noindent\textbf{Sensitivity to Hyper-parameter $\lambda$.}
Figure~\ref{fig:param} shows the accuracy of FreRA on the 3 HAR datasets under varying $\lambda$ compared to their second-best baselines plotted in dashed lines for reference. The result demonstrates that the downstream task performance remains stable across different values of $\lambda$ and consistently better than the baseline, indicating FreRA is robust to the selection of the hyper-parameter's value. 
In Figure~\ref{fig:param}, UCIHAR, MS and WISDM achieve peak performances at $\lambda = 1, 10, 3$ respectively. On the left of the peak, the performance is suboptimal because redundant frequency components are included in the critical components. On the right of the peaks, incomplete critical information leads to degraded performances. The peak values indicate that FreRA learns to preserve only \revise{semantically relevant} frequency components and distort the \revise{semantically irrelevant} components, achieving the optimal view for representation learning.  

\noindent\textbf{On Alternative Contrastive Learning Frameworks.}
Due to its meticulous design, FreRA can be seamlessly integrated with different contrastive models in a plug-and-play manner.
In Table~\ref{Table:result-clmodel1}, we apply FreRA to five alternative contrastive learning models: (1) three time-series contrastive models TS2Vec~\citep{DBLP:conf/aaai/YueWDYHTX22}, TS-TCC~\citep{DBLP:journals/pami/EldeleRCWKLG23}, and SoftCLT~\citep{lee2023soft} with their default augmentations as baselines and (2) two general purpose contrastive learning models originally designed for the vision domain,  BYOL~\citep{grill2020bootstrap} and SimCLR~\citep{DBLP:conf/icml/ChenK0H20} with the best time-domain and frequency-domain augmentations as baselines. For SimCLR, despite the NT-Xent loss originally applied in SimCLR, we also use InfoNCE as the loss function, which forms the framework we use in our main result. The same usage has been deployed in \citet{yeh2022decoupled} and \citet{wu2024understanding} as well. Our current evaluation covers 5 contrastive learning frameworks and 3 types of contrastive loss functions. All the models differ in network design and optimization objectives. It is worth noting that the contrastive losses used in TS-TCC, TS2Vec, and SoftCLT are different variants of InfoNCE, each with its unique formulation. The results presented consistently demonstrate that FreRA is a plug-and-play method that effectively enhances existing contrastive learning frameworks. This experiment highlights the flexibility and adaptability of our approach.

More comprehensive ablation studies investigating the effect of unimportant component selection mechanisms, and a case study on the learned vector $\mathbf{s}$ 
are presented in the Appendix~\ref{sec:ablation}.

\section{Conclusion}
In this paper, we propose Frequency-Refined Augmentation (FreRA), a lightweight yet effective automatic augmentation designed for time series contrastive learning on classification tasks. FreRA leverages the global, independent, and compact properties of the frequency domain to generate semantic-preserving views through independent modifications on separated frequency components. 
Its effectiveness is verified both theoretically and empirically. Experiments on 135 benchmark datasets across diverse applications demonstrate that FreRA is universally effective in contrastive learning and generalizes well in transfer learning scenarios. In addition, it is robust to hyper-parameter settings, flexible and effective when applied to various contrastive learning frameworks
, and consistently able to capture the inherent semantically relevant information. 

\begin{acks}
 This research is supported by the RIE2025 Industry Alignment Fund – Industry Collaboration Projects (IAF-ICP) (Award I2301E0026), administered by A*STAR, as well as supported by Alibaba Group and NTU Singapore through Alibaba-NTU Global e-Sustainability CorpLab (ANGEL). Dr. Qian Hangwei is supported by A*STAR Career Development Fund <Project No. C243512010>.
\end{acks}


\clearpage

\bibliographystyle{ACM-Reference-Format}
\bibliography{sample-base}


\appendix

\section{Appendix}

\subsection{Details regarding Figure~\ref{fig:MI}}
\label{sec:explain-fig1}
Directly calculating the mutual information between the entire time series and the label is not trivial due to the curse of dimensionality. To address this, in Figure~\ref{fig:MI}, we calculate the mutual information between each timestamp and the label. It visualizes the amount of semantic information preserved across all the timestamps.
For instance, the value of the orange curve at timestamp $\forall t \in [1,..., L]$ is $\text{MI}(\textsf{x}^t; \textsf{y})$ of the UCIHAR dataset, where $\textsf{x}^t \in \mathbb{R}^D$ is the signal at timestamp $t$, $\textsf{y}$ is the ground truth label. $L=128$ and $D=6$ are the length and dimension of the samples. Estimating the timestamp-wise mutual information, with a dimension of only 6, allows us to avoid the curse of dimensionality.
In this plot, we do not intend to suggest a single timestamp alone is fully representative of the underlying semantics. Instead, the figure illustrates how the informative content varies across different augmentation functions. While a single timestamp may not directly indicate specific semantic meaning, the plot demonstrates the manipulation of the frequency domain benefits the augmented views. This is attributed to the undeteriorated critical components that are semantically informative. 



\subsection{Properties of the Discrete Fourier Transform (DFT)}

\subsubsection{Conjugate Symmetric.}
\label{sec:conj-sym}
\sloppy Given a signal $x(n) \in \mathbb{R}, n \in \{0, 1, ..., L-1\}$, its DFT $X(m) \in \mathbb{C}, m \in \{0, 1, ..., L-1\}$ is conjugate symmetric, i.e., $X(L-m) = \overline{X(m)}$. The proof is as follows:
\begin{equation}
\begin{split}
    X(L-m) & = \sum_{n=0}^{L-1} x(n) e^{-\frac{2 \pi i}{L} (L-m) n} \\
    & = \sum_{n=0}^{L-1} x(n) e^{\frac{2 \pi i}{L} m n} \\
    & = \overline{X(m)}.
\end{split}
\end{equation}
Converting back to our notation, $X(m) = \overline{X(L-m)}=x_f^{L-m+1}$, which explains the second condition of $X(m)$ in Eq.~\ref{Equation:annotation-conv}. The Conjugate Symmetry allows only half of the DFT signal to recover the entire time series, which also justifies why FreRA manipulates only half of the frequency components, i.e., $F=\lfloor{L/2}\rfloor +1$. This property allows FreRA to have a lightweight structure. 

\subsubsection{Orthogonal of Fourier basis.}
\label{sec:orthogonal}
The inverse DFT, i.e., 
\begin{equation}
    x(n)=\frac{1}{L}\sum_{m=0}^{L-1} X(m) e^{\frac{2 \pi i}{L} m n}, 
\end{equation}
decompose the time domain on the Fourier basis $\mathbf{u}_m = [e^{\frac{2 \pi i}{L} m n} | n=0, 1, ..., L-1]^T \in \mathbb{C}^L$, where frequency components $X(m)$ are the coefficients with respect to the Fourier basis. 
The orthogonal property of Fourier basis, i.e., $\langle \mathbf{u}_m, \mathbf{u}_q \rangle = 0$ if $m \neq q$, is proved below.

\begin{equation}
\begin{split}
    \langle \mathbf{u}_m, \mathbf{u}_q \rangle & = \mathbf{u}_m^T \overline{\mathbf{u}_q} \\
    & = \sum_{n=0}^{L-1} e^{\frac{2 \pi i}{L} m n} e^{-\frac{2 \pi i}{L} q n} \\
    & = \sum_{n=0}^{L-1} e^{\frac{2 \pi i}{L} (m-q) n} \\
     (\text{the sum of } & \text{a geometric series follows: } \sum_{n=0}^{L-1} r^n = \frac{1-r^L}{1-r})\\
    & = \frac{1-e^{\frac{2 \pi i}{L} L (m-q)}}{1-e^{\frac{2 \pi i}{L} (m-q)}} \\
    (e^{\frac{2 \pi i}{L} L (m-q)} & = 1 \text{ and } e^{\frac{2 \pi i}{L} (m-q)} \neq 1 \text{ if } m \neq q) \\
    & = 0
\end{split}
\end{equation}

\subsection{Proofs of Propositions}

\begin{proposition}{\textbf{(Conservation of Entropy)}}
Let $\textsf{x}$ and $\textsf{x}_f$ be the random variables denoting the time series in the time domain and the frequency domain respectively, then we have $\text{H}(\textsf{x})=\text{H}(\textsf{x}_f)$. 
\label{proposition:cons-entropy}
\end{proposition}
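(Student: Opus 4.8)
\textbf{Proof proposal for Proposition~\ref{proposition:cons-entropy} (Conservation of Entropy).}

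The plan is to exploit the fact that the Discrete Fourier Transform is a \emph{linear bijection} between $\mathbb{R}^L$ (or $\mathbb{R}^{L\times D}$) and its image, together with the standard change-of-variables formula for differential entropy under an invertible linear map. Concretely, for a random vector $\textsf{z}$ with density and an invertible matrix $M$, one has $\text{H}(M\textsf{z}) = \text{H}(\textsf{z}) + \log\lvert\det M\rvert$. Writing the DFT as $\textsf{x}_f = W\textsf{x}$ with $W$ the (suitably normalized) Fourier matrix, the result reduces to showing $\lvert\det W\rvert = 1$ for the normalization in use, so that the additive Jacobian term vanishes and $\text{H}(\textsf{x}_f)=\text{H}(\textsf{x})$.

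First I would make precise which entropy is meant: since $\textsf{x}$ is a continuous-valued time series, this is differential entropy, and I would state the change-of-variables lemma for differential entropy under a bijective (here linear) transformation as the single external tool. Second, I would invoke the orthogonality of the Fourier basis already established in Appendix~\ref{sec:orthogonal}: the vectors $\mathbf{u}_m$ are mutually orthogonal with $\langle \mathbf{u}_m,\mathbf{u}_m\rangle = L$, so the normalized basis $\{\tfrac{1}{\sqrt L}\mathbf{u}_m\}$ is orthonormal and the unitary (symmetric-normalization) DFT matrix $U = \tfrac{1}{\sqrt L}W$ satisfies $U^{*}U = I$, hence $\lvert\det U\rvert = 1$. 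Third, I would address the real-input subtlety: because $x(n)\in\mathbb{R}$, the map $x\mapsto x_f$ lands in the conjugate-symmetric subspace (Appendix~\ref{sec:conj-sym}), so strictly $\textsf{x}_f$ is determined by the $F=\lfloor L/2\rfloor+1$ components $(X(0),\dots,X(F-1))$; I would argue the real-coordinate representation of this restricted map is still orthogonal (up to the bookkeeping of which frequencies are real and which are complex, contributing factors that cancel), so no Jacobian term survives. Alternatively — and more cleanly — I would note that conservation of entropy is \emph{basis-invariant under orthonormal changes of coordinates}, and expressing both sides in real orthonormal coordinates makes the equality immediate; the per-channel case $D>1$ follows since the DFT acts identically and independently on each channel, so the block-diagonal transform again has unit Jacobian.

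The main obstacle I anticipate is purely the normalization/real-representation bookkeeping rather than anything conceptually deep: the DFT as written in the paper, $X(m)=\sum_n x(n)e^{-2\pi i mn/L}$, is \emph{not} unitary (it scales by $\sqrt L$), and $\textsf{x}_f$ is complex and redundant, so one must either (i) pass to the unitary normalization and note entropy is unaffected by the deterministic rescaling/relabeling that connects the two conventions, or (ii) work directly with a real orthonormal Fourier basis (cosine/sine basis) for which the transform is literally an orthogonal matrix on $\mathbb{R}^L$. I would take route (ii) in the write-up for rigor, citing Appendix~\ref{sec:orthogonal} for orthogonality and Appendix~\ref{sec:conj-sym} for the dimension count, and remark that the choice of normalization is immaterial because any invertible deterministic reparametrization that is common to the construction of both $\textsf{x}$ and $\textsf{x}_f$ in the entropy/MI expressions cancels — which is exactly what is needed downstream for the claimed identities $\text{MI}(\textsf{x}_{\text{crit}};\textsf{x})=\text{MI}(\mathbf{w}_{\text{crit}}\odot\textsf{x}_f;\textsf{x}_f)$ and $\text{MI}(\textsf{x};\textsf{y})=\text{MI}(\textsf{x}_f;\textsf{y})$.
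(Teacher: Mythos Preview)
Your proposal is correct but takes a substantially more elaborate route than the paper. The paper's own proof is two lines: it treats the entropy as \emph{discrete} (Shannon) entropy, observes that the DFT is a one-to-one invertible map so that $p(x)=p(x_f)$ as probability mass functions, and then simply rewrites $\sum_x p(x)\log p(x)$ as $\sum_{x_f} p(x_f)\log p(x_f)$. In other words, the paper uses nothing more than ``entropy is invariant under relabeling of the state space by a bijection,'' with no Jacobian, no unitarity, and no real-versus-complex bookkeeping.

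Your approach instead commits to \emph{differential} entropy and the change-of-variables identity $\text{H}(M\textsf{z})=\text{H}(\textsf{z})+\log\lvert\det M\rvert$, which forces you to confront the normalization of the DFT matrix and the conjugate-symmetric real parametrization. This is technically the more honest route for continuous-valued time series --- and you correctly flag that under the paper's unnormalized convention $X(m)=\sum_n x(n)e^{-2\pi i mn/L}$ the Jacobian term would \emph{not} vanish, so the literal equality $\text{H}(\textsf{x})=\text{H}(\textsf{x}_f)$ fails for differential entropy unless one passes to the unitary normalization or a real orthonormal cosine/sine basis. Your workaround (route~(ii), or arguing that the constant offset cancels in the downstream MI identities) is sound. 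What your approach buys is rigor for the continuous case and an explicit accounting of why the normalization does not ultimately matter for Propositions~\ref{proposition:cons-MI}--\ref{proposition:mono-increase}; what the paper's approach buys is brevity, at the cost of implicitly treating $\textsf{x}$ as discrete (or silently ignoring the Jacobian).
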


\begin{proof}
Since the DFT is a one-to-one invertible transformation, we have $p(x) = p(x_f)$.
\begin{equation}
    \begin{split}
    \text{H}(\textsf{x}) & = \sum_{x} p(x) \log p(x) \\ 
    & = \sum_{x_f} p(x_f) \log p(x_f) \\
    & = \text{H}(\textsf{x}_f)
    \end{split}
\end{equation}
\end{proof}

\begin{proposition}{\textbf{(Conservation of Mutual Information)}}
Let $\textsf{x}$, $\textsf{x}_f$, and $\textsf{y}$ be the random variables denoting the time series in the time domain and the frequency domain, and their corresponding label respectively, then we have $\text{MI}(\textsf{x}; \textsf{y})=\text{MI}(\textsf{x}_f; \textsf{y})$. 
\label{proposition:cons-MI}
\end{proposition}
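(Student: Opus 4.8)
The plan is to reduce the claim $\text{MI}(\textsf{x};\textsf{y})=\text{MI}(\textsf{x}_f;\textsf{y})$ to the fact that the DFT is a deterministic invertible (bijective) map between $\textsf{x}$ and $\textsf{x}_f$, so that conditioning on one is the same as conditioning on the other. Concretely, I would expand the mutual information in the form $\text{MI}(\textsf{x};\textsf{y}) = \text{H}(\textsf{y}) - \text{H}(\textsf{y}\mid\textsf{x})$, exactly the decomposition already introduced in the main text, and then argue that $\text{H}(\textsf{y}\mid\textsf{x}) = \text{H}(\textsf{y}\mid\textsf{x}_f)$. Since $\text{H}(\textsf{y})$ is common to both sides, this immediately yields the result.

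The key steps, in order, are: (i) Note $\textsf{x}_f = \mathcal{F}(\textsf{x})$ with $\mathcal{F}$ a fixed bijection, hence $\sigma(\textsf{x}) = \sigma(\textsf{x}_f)$ as $\sigma$-algebras (equivalently, $p(y\mid x) = p(y\mid x_f)$ whenever $x_f = \mathcal{F}(x)$, because the event $\{\textsf{x}=x\}$ and $\{\textsf{x}_f=\mathcal{F}(x)\}$ coincide). (ii) Conclude $\text{H}(\textsf{y}\mid\textsf{x}) = \mathbb{E}_{\textsf{x}}[\,\text{H}(\textsf{y}\mid\textsf{x}=x)\,] = \mathbb{E}_{\textsf{x}_f}[\,\text{H}(\textsf{y}\mid\textsf{x}_f=x_f)\,] = \text{H}(\textsf{y}\mid\textsf{x}_f)$, using the change of variables and $p(x)=p(x_f)$ established in Proposition~\ref{proposition:cons-entropy}. (iii) Subtract from $\text{H}(\textsf{y})$ on both sides to obtain $\text{MI}(\textsf{x};\textsf{y}) = \text{H}(\textsf{y}) - \text{H}(\textsf{y}\mid\textsf{x}) = \text{H}(\textsf{y}) - \text{H}(\textsf{y}\mid\textsf{x}_f) = \text{MI}(\textsf{x}_f;\textsf{y})$. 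Alternatively, and perhaps more cleanly, I would invoke the data processing inequality in both directions: $\textsf{y} \to \textsf{x} \to \textsf{x}_f$ gives $\text{MI}(\textsf{x}_f;\textsf{y}) \le \text{MI}(\textsf{x};\textsf{y})$, and $\textsf{y} \to \textsf{x}_f \to \textsf{x}$ (valid since $\textsf{x} = \mathcal{F}^{-1}(\textsf{x}_f)$) gives the reverse inequality, so equality holds. This mirrors the structure of the already-proven Proposition~\ref{proposition:cons-entropy} and keeps the argument short.

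One technical caveat worth a sentence in the write-up: the frequency representation $\textsf{x}_f$ is complex-valued, so strictly speaking one treats $\text{H}(\textsf{x}_f)$ via the real and imaginary parts (or, as the paper's notation in Eq.~\ref{Equation:annotation-conv} suggests, via the conjugate-symmetric full-length representation $X(0),\dots,X(L-1)$ which carries the same information as $x_f$); in any case the map remains a bijection and the argument is unaffected. The main obstacle is not mathematical depth but rather stating the measure-theoretic step — ``a deterministic invertible transformation preserves conditional entropy'' — at the right level of rigor without belaboring it; I would handle this exactly as Proposition~\ref{proposition:cons-entropy} does, by appealing to $p(x)=p(x_f)$ and the change-of-variables identity, and then noting that conditioning on $\textsf{y}$ does not disturb the bijection between $\textsf{x}$ and $\textsf{x}_f$.

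\begin{proof}
Since the DFT is a one-to-one invertible transformation, for every realization we have $x_f = \mathcal{F}(x)$ and $x = \mathcal{F}^{-1}(x_f)$, so the events $\{\textsf{x}=x\}$ and $\{\textsf{x}_f = \mathcal{F}(x)\}$ coincide. Consequently $p(y \mid x) = p(y \mid x_f)$ whenever $x_f = \mathcal{F}(x)$, and hence $\text{H}(\textsf{y} \mid \textsf{x}=x) = \text{H}(\textsf{y} \mid \textsf{x}_f = \mathcal{F}(x))$. Using $p(x) = p(x_f)$ (Proposition~\ref{proposition:cons-entropy}) and the change of variables $x_f = \mathcal{F}(x)$,
\begin{equation}
\begin{split}
    \text{H}(\textsf{y} \mid \textsf{x}) & = \sum_x p(x)\, \text{H}(\textsf{y} \mid \textsf{x}=x) \\
    & = \sum_{x_f} p(x_f)\, \text{H}(\textsf{y} \mid \textsf{x}_f = x_f) \\
    & = \text{H}(\textsf{y} \mid \textsf{x}_f).
\end{split}
\end{equation}
Therefore
\begin{equation}
    \text{MI}(\textsf{x}; \textsf{y}) = \text{H}(\textsf{y}) - \text{H}(\textsf{y} \mid \textsf{x}) = \text{H}(\textsf{y}) - \text{H}(\textsf{y} \mid \textsf{x}_f) = \text{MI}(\textsf{x}_f; \textsf{y}).
\end{equation}
\end{proof}
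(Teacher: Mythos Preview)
Your proof is correct and follows essentially the same approach as the paper: both use the decomposition $\text{MI}(\textsf{x};\textsf{y}) = \text{H}(\textsf{y}) - \text{H}(\textsf{y}\mid\textsf{x})$ and then argue $\text{H}(\textsf{y}\mid\textsf{x}) = \text{H}(\textsf{y}\mid\textsf{x}_f)$ via the bijectivity of the DFT (the paper phrases this as $p(x,y)=p(x_f,y)$ and changes variables in the conditional-entropy sum, while you equivalently use $p(y\mid x)=p(y\mid x_f)$ together with $p(x)=p(x_f)$). Your write-up is somewhat more careful about the measure-theoretic justification and adds the data-processing-inequality alternative, but the core argument is the same.
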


\begin{proof}
\label{proof:cons-MI}
Since the DFT does not alter the label of the time series variable, we have $p(x, y) = p(x_f, y)$.
\begin{equation}
    \begin{split}
    \text{MI}(\textsf{x}; \textsf{y}) & = \text{H}(\textsf{y}) - \text{H}(\textsf{y} | \textsf{x}) \\
    & = \text{H}(\textsf{y}) - \sum_{x, y} p(x, y) \log {\frac{p(x, y)}{p(x)}} \\ 
    & = \text{H}(\textsf{y}) - \sum_{x_f, y} p(x_f, y) \log {\frac{p(x_f, y)}{p(x_f)}} \\
    & = \text{MI}(\textsf{x}_f; \textsf{y})
    \end{split}
\end{equation}
Similarly, we can proof $\text{MI}(\textsf{x}; \tilde{\textsf{x}}) = \text{MI}(\textsf{x}_f; \tilde{\textsf{x}}_f)$, where random variable $\textsf{x}$ and $\tilde{\textsf{x}}$ denotes two time series and $\textsf{x}_f$ and $\tilde{\textsf{x}}_f$ denotes their frequency-domain counterpart. 
\end{proof}

\begin{proposition}{\textit{With the reliable assumption that the noisy frequency components are independent to the label, FreRA is a semantic preserving transformation, i.e., $\text{MI}(\mathcal{A}_\mathbf{s}(\textsf{x}) ; \textsf{y}) = \text{MI}(\textsf{x}; \textsf{y})$.}}
\label{proposition:cri_sem}
\end{proposition}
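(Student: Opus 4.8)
The plan is to decompose the augmented view $\mathcal{A}_\mathbf{s}(\textsf{x})$ into its critical and distorted parts and track the mutual information with $\textsf{y}$ through each step, exploiting the invertibility of the DFT (Proposition~\ref{proposition:cons-MI}) and the orthogonality of the Fourier basis established in Appendix~\ref{sec:orthogonal}. First I would reduce the claim to the frequency domain: since $\mathcal{A}_\mathbf{s}(\textsf{x}) = \mathcal{F}^{-1}(\mathbf{w}_{\text{crit}} \odot \textsf{x}_f + \mathbf{w}_\text{dist} \odot \textsf{x}_f)$ and $\mathcal{F}^{-1}$ is a bijection, by the reasoning of Proposition~\ref{proposition:cons-MI} we have $\text{MI}(\mathcal{A}_\mathbf{s}(\textsf{x}); \textsf{y}) = \text{MI}(\mathbf{w}_{\text{crit}} \odot \textsf{x}_f + \mathbf{w}_\text{dist} \odot \textsf{x}_f; \textsf{y})$ and similarly $\text{MI}(\textsf{x}; \textsf{y}) = \text{MI}(\textsf{x}_f; \textsf{y})$. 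So it suffices to show $\text{MI}(\mathbf{w}_{\text{crit}} \odot \textsf{x}_f + \mathbf{w}_\text{dist} \odot \textsf{x}_f; \textsf{y}) = \text{MI}(\textsf{x}_f; \textsf{y})$.

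The next step is to observe that the supports of $\mathbf{w}_{\text{crit}}$ and $\mathbf{w}_\text{dist}$ are disjoint by construction: $\mathbf{w}_\text{dist}$ is nonzero only on the index set $D = \{i \mid s_i < \min(0, \frac1F\sum_j s_j)\}$, which forces $p_i = \sigma(s_i) < 1/2$ there, and under the low-temperature Gumbel-Softmax limit $w_{\text{crit}}^i$ on those indices is (with the operative convention) treated as $0$; more cleanly, one partitions the frequency indices into the critical set $C$ (where $w_{\text{crit}}^i = 1$, $w_\text{dist}^i = 0$), the distorted set $D$ (where $w_{\text{crit}}^i = 0$, $w_\text{dist}^i > 0$), and a discarded set where both are $0$. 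Writing $\textsf{x}_{f,C}$, $\textsf{x}_{f,D}$ for the restrictions of $\textsf{x}_f$ to these blocks, the augmented frequency vector is $(\textsf{x}_{f,C}, \mathbf{w}_\text{dist}\odot\textsf{x}_{f,D})$ up to the zeroed coordinates. Since on $D$ the distortion is a fixed (stop-gradient) positive diagonal rescaling, $\mathbf{w}_\text{dist}\odot\textsf{x}_{f,D}$ is a deterministic invertible function of $\textsf{x}_{f,D}$, so it carries exactly the same information about $\textsf{y}$ as $\textsf{x}_{f,D}$ does. Then I would invoke the assumption of the proposition: the noisy (distorted and discarded) components are independent of the label, i.e. $\textsf{y} \perp (\textsf{x}_{f,D}, \textsf{x}_{f,\text{discarded}})$, and more strongly $\textsf{y} \perp (\textsf{x}_{f,D}, \textsf{x}_{f,\text{discarded}}) \mid \textsf{x}_{f,C}$, so that conditioning on or adding the noisy block changes nothing: $\text{MI}(\textsf{x}_{f,C}, \mathbf{w}_\text{dist}\odot\textsf{x}_{f,D}; \textsf{y}) = \text{MI}(\textsf{x}_{f,C}; \textsf{y})$. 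Finally, the defining property of the critical components, $\text{MI}(\textsf{x}_{\text{crit}}; \textsf{x}) = \text{MI}(\textsf{x};\textsf{y})$ together with $\text{MI}(\textsf{x}_{\text{crit}};\textsf{x}) = \text{MI}(\mathbf{w}_{\text{crit}}\odot\textsf{x}_f;\textsf{x}_f)$ (proved in Appendix~\ref{proof:cons-MI}), gives that $\textsf{x}_{f,C}$ retains all the label information of $\textsf{x}_f$, i.e. $\text{MI}(\textsf{x}_{f,C}; \textsf{y}) = \text{MI}(\textsf{x}_f; \textsf{y})$; chaining the equalities yields the claim.

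The main obstacle I anticipate is making the "disjoint support" and "discarded components carry no label information" steps fully rigorous. The Gumbel-Softmax reparameterization produces $w_{\text{crit}}^i \in (0,1)$ rather than exactly $\{0,1\}$, so strictly one argues in the $\tau_w \to 0$ limit, or treats $\mathbf{w}_{\text{crit}}$ as a genuine $\{0,1\}^F$ Bernoulli vector (as the definition of $\mathbf{w}_{\text{crit}}$ in the model statement does); I would adopt the latter idealization and note it matches the paper's analysis. The more delicate point is the conditional-independence upgrade $\textsf{y}\perp\textsf{x}_{f,\text{noise}}\mid\textsf{x}_{f,C}$: the stated assumption only gives marginal independence $\textsf{y}\perp\textsf{x}_{f,\text{noise}}$, and to conclude $\text{MI}(\textsf{x}_{f,C},\textsf{x}_{f,\text{noise}};\textsf{y})=\text{MI}(\textsf{x}_{f,C};\textsf{y})$ one needs the stronger form — here I would lean on the orthogonality/uncorrelatedness of distinct Fourier coefficients (invoked throughout the "Independent" discussion) and the characterization that critical components are exactly the label-relevant ones, so that any residual dependence of $\textsf{y}$ on the noise block beyond what $\textsf{x}_{f,C}$ already explains would contradict $\text{MI}(\textsf{x}_{f,C};\textsf{y}) = \text{MI}(\textsf{x}_f;\textsf{y})$. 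I would also double-check the edge case where the discarded set is nonempty, confirming it is subsumed by "noisy components independent of the label," and handle multivariate $D$-channel data by noting the argument applies coordinate-wise in the channel dimension.
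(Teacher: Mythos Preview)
Your overall strategy --- reduce to the frequency domain via the DFT bijection, partition the coefficients into critical and non-critical blocks, and use the independence assumption on the non-critical block --- is the same as the paper's. The paper executes it more directly: it sets $\textsf{x}_f^\text{crit} = \mathbf{w}_\text{crit}\odot\textsf{x}_f$ and $\textsf{x}_f^\text{dist} = (\mathbf{1}-\mathbf{w}_\text{crit})\odot\textsf{x}_f$, uses the entropy additivity $\text{H}(\textsf{x}_f) = \text{H}(\textsf{x}_f^\text{crit}) + \text{H}(\textsf{x}_f^\text{dist})$ together with $\textsf{x}_f^\text{dist}\perp\textsf{y}$ to expand $\text{MI}(\textsf{x}_f;\textsf{y})$ down to $\text{MI}(\textsf{x}_f^\text{crit};\textsf{y})$, and then repeats the identical entropy calculation for $\text{MI}((\mathbf{w}_\text{crit}+\mathbf{w}_\text{dist})\odot\textsf{x}_f;\textsf{y})$, obtaining the same value $\text{MI}(\textsf{x}_f^\text{crit};\textsf{y})$. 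Both halves are proved symmetrically from the stated assumption alone, then Proposition~\ref{proposition:cons-MI} closes the argument.

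There is a genuine gap in your final step. You invoke ``the defining property of the critical components, $\text{MI}(\textsf{x}_\text{crit};\textsf{x}) = \text{MI}(\textsf{x};\textsf{y})$'' to conclude $\text{MI}(\textsf{x}_{f,C};\textsf{y}) = \text{MI}(\textsf{x}_f;\textsf{y})$. But that equality is not a definition or an established lemma; it is a \emph{desideratum} the training objective is designed to approach (see the discussion around Eq.~\ref{eq:overall-objective}, $\mathcal{L}_\text{reg}$, and Figure~\ref{fig:MI-crit}), and it holds only at a particular proportion of critical components --- it cannot be assumed in proving Proposition~\ref{proposition:cri_sem}. Even if it were granted, the inference from $\text{MI}(\textsf{x}_\text{crit};\textsf{x}) = \text{MI}(\textsf{x};\textsf{y})$ (mutual information with the \emph{data}) to $\text{MI}(\textsf{x}_{f,C};\textsf{y}) = \text{MI}(\textsf{x}_f;\textsf{y})$ (mutual information with the \emph{label}) is not direct. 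Your attempted justification in the obstacles paragraph --- that residual label-dependence in the noise block would contradict $\text{MI}(\textsf{x}_{f,C};\textsf{y}) = \text{MI}(\textsf{x}_f;\textsf{y})$ --- is circular, since that is precisely what you are trying to establish. The fix is simple: prove $\text{MI}(\textsf{x}_f;\textsf{y}) = \text{MI}(\textsf{x}_{f,C};\textsf{y})$ directly from the same noise-independent-of-label assumption you already use on the augmented side, exactly as the paper does. Your attention to the conditional-independence subtlety and to the discarded-index set is in fact more careful than the paper's own proof (which tacitly treats pairwise independence as joint), but the detour through the ``defining property'' should be dropped.
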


\begin{proof}
\label{proof:semantic-preserving}
    Let $\textsf{x}_f^\text{crit} = \mathbf{w}_\text{crit} \odot \textsf{x}_f$ and $\textsf{x}_f^\text{dist} =( \mathbf{1} - \mathbf{w}_\text{crit}) \odot \textsf{x}_f$ denote the critical and noisy frequency components respectively.
    Knowing $\textsf{x}_f^\text{crit}$ and $\textsf{x}_f^\text{dist}$ are independent, we have 
    \begin{equation}
        \text{H}(\textsf{x}_f) = \text{H}(\textsf{x}_f^\text{crit}) + \text{H}(\textsf{x}_f^\text{dist}).
    \end{equation}
    Then we show that 
    \begin{equation}
    \begin{split}
        \text{MI}(\textsf{x}_f; \textsf{y}) & = \text{H}(\textsf{x}_f) - \text{H}(\textsf{x}_f; \textsf{y}) \\
        & = \text{H}(\textsf{x}_f^\text{crit}) + \text{H}(\textsf{x}_f^\text{dist}) - \text{H}(\textsf{x}_f^\text{crit}, \textsf{x}_f^\text{dist} | \textsf{y}) \\
        (\textsf{x}_f^\text{dist} & \text{ , as irrelevant components, is independent to } \textsf{y}) \\
        & = \text{H}(\textsf{x}_f^\text{crit}) + \text{H}(\textsf{x}_f^\text{dist}) - (\text{H}(\textsf{x}_f^\text{crit} | \textsf{y}) + \text{H}(\textsf{x}_f^\text{dist})) \\
        & = \text{H}(\textsf{x}_f^\text{crit}) - \text{H}(\textsf{x}_f^\text{crit} | \textsf{y}) \\
        & = \text{MI}(\textsf{x}_f^\text{crit}; \textsf{y}).
    \end{split}
    \end{equation}
    Similarly, 
    \begin{equation}
    \resizebox{0.46\textwidth}{!}{$
    \begin{split}
        \text{MI}(\mathcal{A}_\mathbf{s}(\textsf{x}); \textsf{y}) & = \text{MI}((\mathbf{w}_\text{crit} + \mathbf{w}_\text{dist}) \odot \textsf{x}_f ; \textsf{y}) \\
        & = \text{H}((\mathbf{w}_\text{crit} + \mathbf{w}_\text{dist}) \odot \textsf{x}_f) - \text{H}((\mathbf{w}_\text{crit} + \mathbf{w}_\text{dist}) \odot \textsf{x}_f; \textsf{y}) \\
        & = \text{H}(\mathbf{w}_\text{crit} \odot \textsf{x}_f) + \text{H}(\mathbf{w}_\text{dist} \odot \textsf{x}_f) \\
        & \quad - \text{H}((\mathbf{w}_\text{crit} \odot \textsf{x}_f + \mathbf{w}_\text{dist} \odot \textsf{x}_f | \textsf{y}) \\
        (\mathbf{w}_\text{dist} & \odot \textsf{x}_f \text{ is independent to } \textsf{y}) \\
        & = \text{H}(\textsf{x}_f^\text{crit}) + \text{H}(\mathbf{w}_\text{dist} \odot \textsf{x}_f) - (\text{H}(\textsf{x}_f^\text{crit} | \textsf{y}) + \text{H}(\mathbf{w}_\text{dist} \odot \textsf{x}_f)) \\
        & = \text{H}(\textsf{x}_f^\text{crit}) - \text{H}(\textsf{x}_f^\text{crit} | \textsf{y}) \\
        & = \text{MI}(\textsf{x}_f^\text{crit}; \textsf{y}).
    \end{split}
    $}
    \end{equation}
Applying Proposition.~\ref{proposition:cons-MI}, we have $\text{MI}(\mathcal{A}_\mathbf{s}(\textsf{x}); \textsf{y}) = \text{MI}(\textsf{x}; \textsf{y})$.
\end{proof}

\begin{proposition}{\textit{$\text{MI}(\mathbf{w}_\text{crit})\odot \textsf{x} ; \textsf{x})$ is monotonically increasing w.r.t the proportion of critical components.}}
\label{proposition:mono-increase}
\end{proposition}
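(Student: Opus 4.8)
The plan is to reduce the mutual information to a plain entropy and then exploit that masking more frequency components is a deterministic refinement. First I would observe that a fixed realization of $\mathbf{w}_\text{crit} \in \{0,1\}^F$ is the indicator vector of a support set $S \subseteq \{1,\dots,F\}$, and that the ``proportion of critical components'' is $|S|/F$; accordingly I would make the statement precise as: whenever $S \subseteq S'$, one has $\text{MI}(\mathbf{w}_S \odot \textsf{x}_f; \textsf{x}_f) \le \text{MI}(\mathbf{w}_{S'} \odot \textsf{x}_f; \textsf{x}_f)$, where $\mathbf{w}_S$ denotes the indicator of $S$. This immediately yields monotonicity along any chain of supports obtained by adding one component at a time, hence monotonicity with respect to the proportion, which is the form used in the trivial-solution discussion surrounding Eq.~\ref{eq:overall-objective}.

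Second, since $\mathbf{w}_S \odot \textsf{x}_f$ is a deterministic function of $\textsf{x}_f$, the conditional entropy $\text{H}(\mathbf{w}_S \odot \textsf{x}_f \mid \textsf{x}_f)$ vanishes, so $\text{MI}(\mathbf{w}_S \odot \textsf{x}_f; \textsf{x}_f) = \text{H}(\mathbf{w}_S \odot \textsf{x}_f)$. Now for $S \subseteq S'$ the vector $\mathbf{w}_S \odot \textsf{x}_f$ is obtained from $\mathbf{w}_{S'} \odot \textsf{x}_f$ by zeroing out the coordinates in $S' \setminus S$, i.e. it is a deterministic function of $\mathbf{w}_{S'} \odot \textsf{x}_f$; the standard inequality $\text{H}(g(Z)) \le \text{H}(Z)$ for discrete entropy then gives $\text{H}(\mathbf{w}_S \odot \textsf{x}_f) \le \text{H}(\mathbf{w}_{S'} \odot \textsf{x}_f)$, which is exactly the claimed monotonicity. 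If a strict increase is wanted whenever a genuinely new component is added, I would invoke the independence of the frequency components implied by the orthogonality of the Fourier basis established in Appendix~\ref{sec:orthogonal} to write $\text{H}(\mathbf{w}_{S \cup \{m\}} \odot \textsf{x}_f) = \text{H}(\mathbf{w}_S \odot \textsf{x}_f) + \text{H}(X(m))$ and note $\text{H}(X(m)) > 0$ for any non-degenerate component; this also recovers the clean additive picture $\text{MI}(\mathbf{w}_S \odot \textsf{x}_f; \textsf{x}_f) = \sum_{m \in S}\text{H}(X(m))$ that makes the monotonicity transparent.

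I expect the main subtlety to be bookkeeping rather than a deep obstacle: (i) pinning down what ``monotone with respect to the proportion'' means when the proportion does not determine $S$ uniquely --- handled by stating the result for the inclusion order and reading off the consequence; and (ii) the discrete-versus-differential-entropy issue --- the refinement argument $\text{H}(g(Z)) \le \text{H}(Z)$ is clean for the discrete entropy used elsewhere in the appendix (e.g. Proposition~\ref{proposition:cons-entropy}), whereas for differential entropy it can fail, in which case I would instead argue directly via the data-processing inequality applied to the deterministic map $\mathbf{w}_{S'} \odot \textsf{x}_f \mapsto \mathbf{w}_S \odot \textsf{x}_f$, giving $\text{MI}(\mathbf{w}_S \odot \textsf{x}_f; \textsf{x}_f) \le \text{MI}(\mathbf{w}_{S'} \odot \textsf{x}_f; \textsf{x}_f)$ in both settings. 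I would present this data-processing version as the primary argument, since it is assumption-light and avoids any dependence on the ambient dimension or on continuity of the distribution of $\textsf{x}_f$.
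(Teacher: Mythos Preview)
Your argument is correct and takes a genuinely different route from the paper. The paper expands the mutual information the other way, writing $\text{MI}(\mathbf{w}_\text{crit}\odot\textsf{x}_f;\textsf{x}_f)=\text{H}(\textsf{x}_f)-\text{H}(\textsf{x}_f\mid\mathbf{w}_\text{crit}\odot\textsf{x}_f)$, then invokes the assumed independence of the frequency components (attributed to the orthogonality of the Fourier basis) to reduce the conditional term to $\text{H}((\mathbf{1}-\mathbf{w}_\text{crit})\odot\textsf{x}_f)=\sum_{i\notin S}\text{H}(\textsf{x}_f^i)$; monotonicity then follows because adding indices to $S$ removes nonnegative summands from the subtracted term. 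You instead expand as $\text{MI}=\text{H}(\mathbf{w}_S\odot\textsf{x}_f)$ via determinism and then either use $\text{H}(g(Z))\le\text{H}(Z)$ or, more robustly, the data-processing inequality along the chain $\textsf{x}_f\to\mathbf{w}_{S'}\odot\textsf{x}_f\to\mathbf{w}_S\odot\textsf{x}_f$.

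The trade-offs are clear. Your DPI argument is assumption-light: it needs neither independence of the components nor any commitment to discrete versus differential entropy, and your careful handling of the inclusion order on supports makes precise what ``monotone in the proportion'' means. The paper's decomposition, on the other hand, directly yields the explicit additive formula $\text{MI}=\text{H}(\textsf{x}_f)-\sum_{i\notin S}\text{H}(\textsf{x}_f^i)$ and the endpoint value $\text{H}(\textsf{x}_f)$ at full support, which is what they plot in Figure~\ref{fig:MI-crit}; you recover the same additive picture only after additionally invoking independence. One caveat worth flagging in either write-up is that orthogonality of the Fourier basis is an algebraic statement and does not by itself imply statistical independence of the coefficients $X(m)$; the paper treats this as given, and you are right to keep your primary argument free of it.
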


\begin{proof}
\label{proof:mono-increase}
    \begin{equation}
    \resizebox{0.46\textwidth}{!}{$
    \begin{split}
        \text{MI}(\mathbf{w}_\text{crit} \odot \textsf{x}_f; \textsf{x}_f) & = \text{H}(\textsf{x}_f) - H(\textsf{x}_f | \mathbf{w}_\text{crit} \odot \textsf{x}_f) \\
        & = \text{H}(\textsf{x}_f) - H(\mathbf{w}_\text{crit} \odot \textsf{x}_f, (\mathbf{1} - \mathbf{w}_\text{crit}) \odot \textsf{x}_f | \mathbf{w}_\text{crit} \odot \textsf{x}_f) \\
        & \text{(}\mathbf{w}_\text{crit} \odot \textsf{x}_f \text{ and } (\mathbf{1} - \mathbf{w}_\text{crit}) \odot \textsf{x}_f \text{ are independent} \\
        & \text{since they lie on the orthogonal basis)} \\
        & = \text{H}(\textsf{x}_f) - H((\mathbf{1} - \mathbf{w}_\text{crit}) \odot \textsf{x}_f) \\
        & = \text{H}(\textsf{x}_f) - \sum_{i=1}^F \mathbbm{1}_{\{1-\mathbf{w}_\text{crit}^i=1\}} \text{H}(\textsf{x}_f^i) 
    \end{split}
    $}
    \end{equation}
Since the first term $\text{H}(\textsf{x}_f)$ is fixed, and the second term 
decreases as the proportion of critical components increases, we prove the monotonic increasing of $\text{MI}(\mathbf{w}_\text{crit} \odot \textsf{x}_f; \textsf{x}_f)$ w.r.t the proportion of critical components. As the proportion becomes 1, i.e., all the frequency components are identified as critical ones, $\text{MI}(\mathbf{w}_\text{crit} \odot \textsf{x}_f; \textsf{x}_f)=\text{H}(\textsf{x}_f)$, as we plot in Figure~\ref{fig:MI-crit}. 
\end{proof}


\subsection{Implementation Details}
\label{sec:implementation}
For the predefined time-domain and frequency-domain augmentations, we follow the parameter settings from ~\citep{DBLP:conf/kdd/QianTM22}. For the InfoMin baseline, we apply its adversarial objective to replace our regularization term. To make it suitable for time series, we use our frequency-domain refinement to substitute the flow-based view generator which is designed for images. This implementation makes it benefit from our frequency-enhanced approach and we denote this baseline as InfoMin$^+$. For other 
baselines, we adopt the results from ~\citep{DBLP:journals/corr/abs-2402-10434,lee2023soft} if they are available. Otherwise, we use the publicly available implementation and fine-tune the model as suggested in the original papers.

Fully-convolutional Network (FCN)~\citep{DBLP:conf/ijcnn/WangYO17} with an output dimension 128 is adopted as the encoder $f_\theta$. The batch size is selected from $\{256, 128, 64, 32, 16, 5\}$ according to the scale of the dataset, and the maximum training epoch is set to 200 for all the experiments. The learning rate is selected from $\{0.03, 0.01, 0.003, 0.001\}$. We adopt SGD optimizer to train the contrastive model and Adam optimizer for $\mathbf{s}$. 
For the hyper-parameter setting, we select discretization temperature $\tau_w$ from $\{0.1, 0.2\}$, and fix temperature coefficient $\tau$ to be 0.2. $\lambda$ is searched from $\{0.1, 0.3, 1, 3, 10, 30\}$. The projector $g_\phi$ is a two-layer MLP, with hidden and output dimensions 128.

To evaluate the performance, we employ the commonly used linear evaluation protocol. We first jointly train FreRA and the contrastive learning model, then we discard other components and keep only the pre-trained encoder $f_{\theta^\ast}$ frozen and train a linear classifier on top of it, as illustrated in the lower right corner of Figure~\ref{fig:FreRA-overview}.  For time series classification tasks, we record the best accuracy (ACC) as the evaluation metric. For anomaly detection tasks, we record both the best accuracy and the Macro-F1 score.

\begin{figure}[!t]
   \begin{minipage}{0.46\textwidth}
     \centering
     \includegraphics[width=0.9\linewidth]{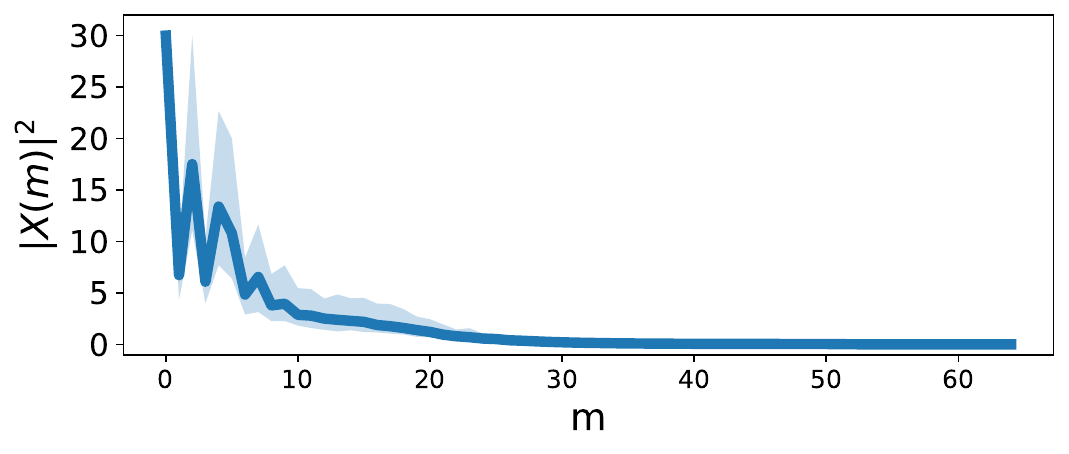}
     \caption{Take the UCIHAR dataset as an example, the energy in the frequency domain $E=\frac{1}{L} \sum_{m=0}^{L-1} \lvert X(m) \rvert ^2$ is mostly concentrated in a compact set of frequency components, whose frequency are the ten lowest. The solid line represents the average energy for the frequency components in the UCIHAR dataset, and the shaded area indicates the range.}
    \label{fig:energy}
   \end{minipage}\hfill
   \begin{minipage}{0.46\textwidth}
     \centering
     \includegraphics[width=0.9\textwidth]{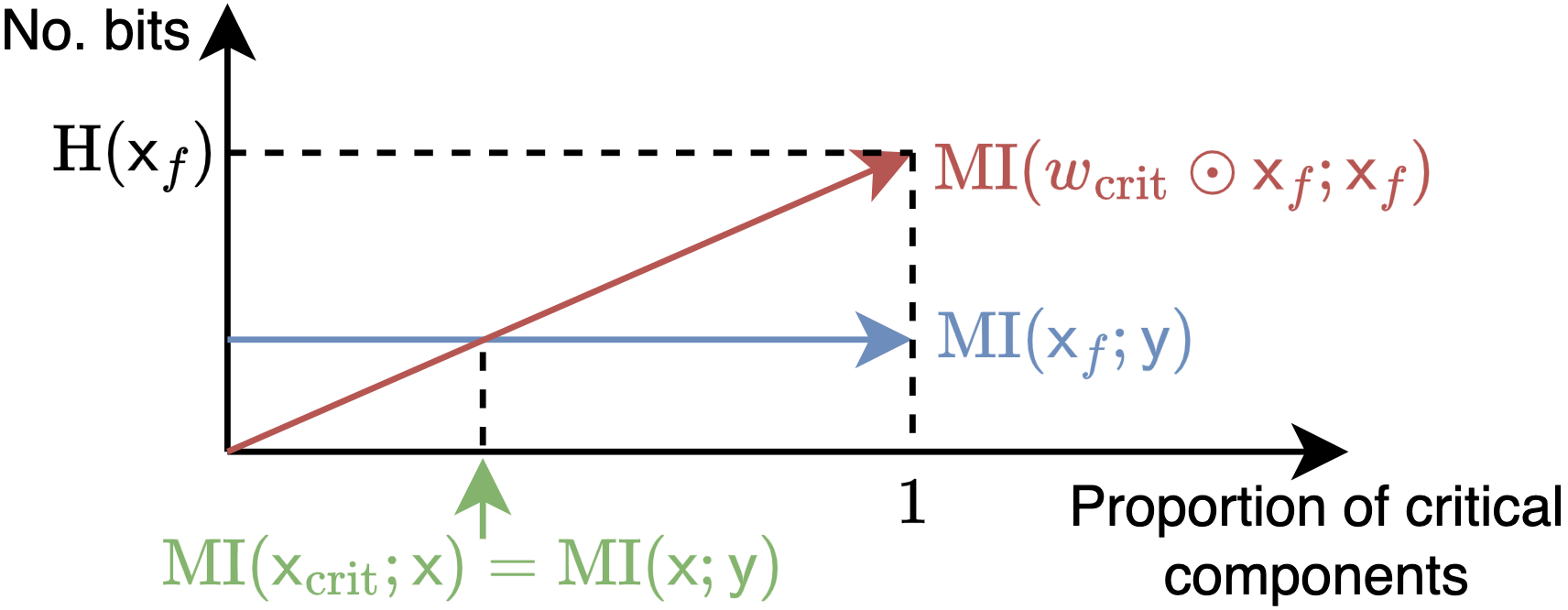}
     \caption{We aim to achieve the intersection point pointed by the green arrow where $\text{MI}(\textsf{x}_{\text{crit}}; \textsf{x}) = \text{MI}(\textsf{x}; \textsf{y})$, meaning the critical frequency components keep and only keep the semantically relevant information. The linearity of $\text{MI}(\mathbf{w}_{\text{crit}} \odot \textsf{x}_f; \textsf{x}_f)$ is for illustration purposes only.}
    \label{fig:MI-crit}
   \end{minipage}
   \vspace{-10pt}
\end{figure}

\begin{figure*}[!t]
    \centering
\includegraphics[width=0.99\textwidth]{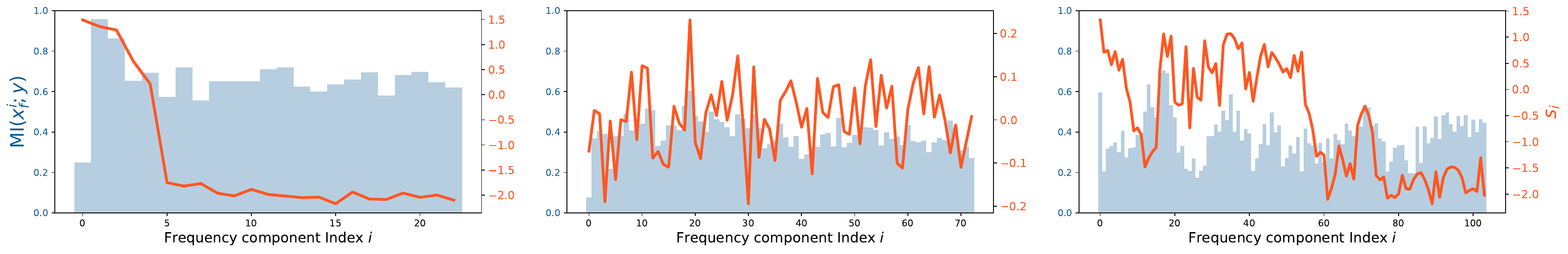}
    \caption{Despite the diverse distributions of global semantics across three datasets (Libras, ArticularyWordRecognition, and Epilepsy), as shown in the blue-grey bar plots,  the learned vector $\mathbf{s}$, represented by the orange lines, consistently captures the inherent critical information by assigning higher values to the most semantically relevant frequency components (those of high values in the bar plots).}
    \label{fig:mif_si}
\end{figure*}

\subsection{Additional Analysis for The Main Result on Time Series Classification Tasks}
\label{sec:main_analysis}
From Table~\ref{Table:result-overall}, Table~\ref{Table:ucr1}, and Table~\ref{Table:uea}, we conclude that frequency-domain augmentations outrun time-domain augmentations in general. This endorses our motivation that the frequency perspective is superior to its time-domain counterpart in preserving global semantics. 
InfoMin$^+$ exceeds other baselines on the three large HAR datasets, which demonstrates the efficacy of its objective. However, the performance gap between InfoMin$^+$ and ours indicates that directly applying an adversarial objective in the frequency domain is not customized for our approach and causes conflict in representation learning. On the other hand, our specially designed objective better suits the frequency-domain refinement. The 5 SOTA
time-series contrastive learning frameworks with carefully designed architectures and objectives become uncompetitive compared to FreRA. 

It is worth noting that our datasets cover multiple applications, diverse data scales, and various types of sensor modalities. Notably, FreRA receives the best overall performance on them, which proves that our approach provides a unified view generation approach and can be flexibly applied to various time-series applications.

\begin{figure}[!t]
    \centering
    \includegraphics[width=0.46\textwidth]{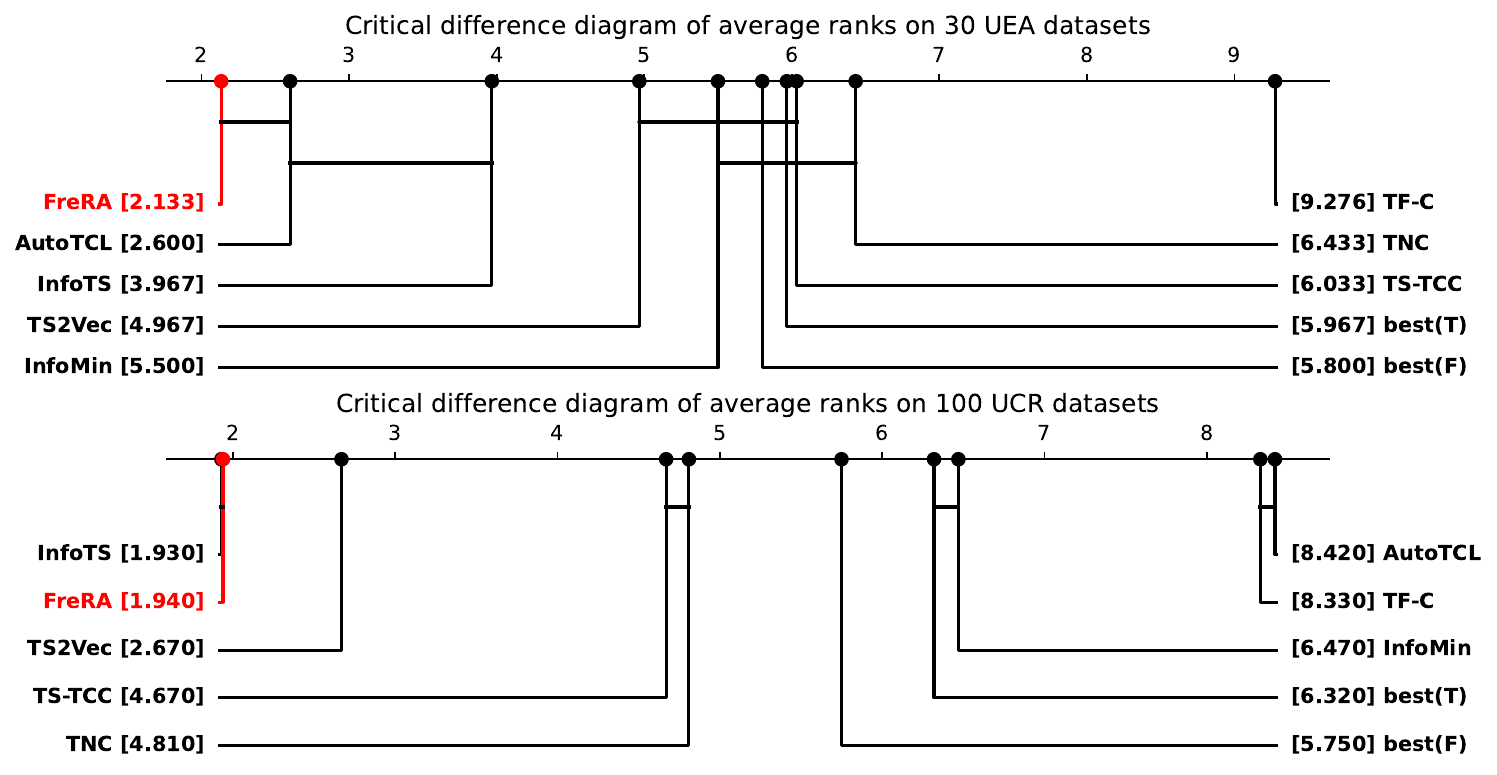}
    \caption{Critical difference diagrams on UEA and UCR archives.}
    \label{fig:cdd}
\end{figure}

\begin{table}[!t]
      \caption{The performance of applying different statistical information to select unimportant components.}
      \label{Table:unimportant-selection}
      \centering
      \small
      \begin{tabular}{c|ccc}
        \toprule
        Dataset & \begin{tabular}[c]{@{}c@{}}mean\\ \textbf{(ours)}\end{tabular} & median & mean+std \\\midrule
        UICHAR  & 0.975  & 0.971  & 0.972    \\
        MS      & 0.982                                                 & 0.976  & 0.975    \\
        WISDM   & 0.972                                                 & 0.963  & 0.963   \\ \bottomrule
        \end{tabular}
\end{table}

\begin{table}[!t]
\caption{The performance of the selected sets of 5 frequency-domain augmentations on the three HAR datasets. The
best performance is highlighted in \textbf{bold}, and the second-best performance is {\ul underlined}.}
\label{Table:har-f}
\centering
\small
\begin{tabular}{ccccccc}
\toprule
Dataset &  \begin{tabular}[c]{@{}c@{}}FreRA\\  \textbf{(ours)}\end{tabular}          & lpf   & hpf   & p\_shift    & ap\_p & ap\_f       \\ \midrule
UCIHAR  & \textbf{0.975} & 0.921 & 0.939 & 0.958       & 0.959 & {\ul 0.960} \\
MS      & \textbf{0.982} & 0.934 & 0.838 & {\ul 0.970} & 0.901 & 0.952       \\
WISDM   & \textbf{0.972} & 0.934 & 0.800 & 0.943       & 0.865 & {\ul 0.950} \\
\bottomrule
\end{tabular}
\end{table}

\subsection{Ablation Studies}
\label{sec:ablation}

\noindent\textbf{Effect of Unimportant Component Selection Mechanisms.}
\label{sec:alternativeD}
To evaluate how the choice of statistical measurement in $D$ affects the final results, we conduct an ablation study comparing the performances when using mean, median, and mean+std of vector $\mathbf{s}$ as the threshold. The results are shown in Table~\ref{Table:unimportant-selection}. All the choices outperform the baseline performances in Table~\ref{Table:result-overall} and the mean value achieves the best performance among them.


\noindent\textbf{Vector $\mathbf{s}$ Captures the Inherent Semantic Distribution in the Frequancy Domain.} 
To verify the effectiveness of FreRA, in Figure~\ref{fig:mif_si}, we visualize the learned parameter vector $\mathbf{s}$, as compared to the ground truth semantics distribution in the frequency domain, on three datasets, including Libras, ArticularyWordRecognition, and Epilepsy.  
Specifically, we use the mutual information (MI) between the frequency components with the label to quantify the ground truth importance of frequency components and presented by blue-grey bar plots. 
The distribution of important frequency components varies across datasets. The important components are distributed in low frequencies, middle frequencies, and across multiple frequencies in these datasets, respectively. The learned vector $\mathbf{s}$ which determines the importance scores of all the frequency components is presented with the orange line plots. Despite diverse distributions, $\mathbf{s}$ consistently captures the inherent critical information by learning to assign higher values to the most semantically relevant frequency components.
\label{sec:s-pres-sem}

\subsection{Additional Results}
Full results of multivariate time series classification on the UEA archive and univariate time series classification on the UCR archive are presented in Table~\ref{Table:uea} and Table~\ref{Table:ucr1}. The full result of the commonly used sets of 11 time-domain augmentations and 5 frequency-domain augmentations on the 3 HAR datasets are shown in Table~\ref{Table:har-t} and Table~\ref{Table:har-f}, respectively. 

\begin{table*}[!ht]
\caption{The overall classification result of 100 univariate time series datasets from the UCR archive. The best performance is highlighted in \textbf{bold}. }
\label{Table:ucr1}
\centering
\scriptsize
\resizebox{\textwidth}{!}{
\begin{tabular}{ccccccccccc}
\toprule
 Dataset                               & \begin{tabular}[c]{@{}c@{}}FreRA\\ \textbf{(Ours)}\end{tabular} & best(T)        & best(F)        & InfoMin        & InfoTS         & AutoTCL        & TS2Vec         & TNC            & TS-TCC         & TF-C           \\ \midrule
ACSF1                          & 0.760                                                  & 0.660          & 0.470          & 0.580          & 0.850          & 0.480          & \textbf{0.910} & 0.730          & 0.730          & 0.100          \\
AllGestureWiimoteX             & 0.707                                                  & 0.526          & 0.561          & 0.549          & 0.630          & 0.517          & \textbf{0.777} & 0.703          & 0.697          & 0.100          \\
AllGestureWiimoteY             & 0.746                                                  & 0.620          & 0.601          & 0.611          & 0.686          & 0.624          & \textbf{0.793} & 0.699          & 0.741          & 0.100          \\
AllGestureWiimoteZ             & 0.707                                                  & 0.581          & 0.573          & 0.577          & 0.629          & 0.576          & \textbf{0.770} & 0.646          & 0.689          & 0.100          \\
BeetleFly                      & \textbf{1.000}                                         & 0.700          & 0.900          & 0.800          & 0.950          & 0.650          & 0.900          & 0.850          & 0.800          & 0.450          \\
BirdChicken                    & \textbf{1.000}                                         & 0.750          & 0.850          & 0.800          & 0.900          & 0.550          & 0.800          & 0.750          & 0.650          & 0.500          \\
BME                            & \textbf{1.000}                                         & 0.920          & 0.940          & 0.967          & \textbf{1.000} & 0.640          & 0.993          & 0.973          & 0.933          & 0.630          \\
CBF                            & \textbf{1.000}                                         & 0.997          & 0.957          & 0.967          & 0.999          & 0.707          & \textbf{1.000} & 0.983          & 0.998          & 0.686          \\
Chinatown                      & \textbf{0.988}                                         & 0.910          & 0.886          & 0.939          & \textbf{0.988} & 0.983          & 0.968          & 0.977          & 0.983          & 0.904          \\
CinCECGTorso                   & \textbf{0.968}                                         & 0.912          & \textbf{0.968} & 0.914          & 0.928          & 0.305          & 0.827          & 0.669          & 0.671          & 0.248          \\
Coffee                         & \textbf{1.000}                                         & 0.964          & 0.964          & \textbf{1.000} & \textbf{1.000} & 0.750          & \textbf{1.000} & \textbf{1.000} & \textbf{1.000} & 0.464          \\
Computers                      & \textbf{0.776}                                         & 0.684          & 0.700          & 0.676          & 0.748          & 0.468          & 0.660          & 0.684          & 0.704          & 0.644          \\
Crop                           & 0.755                                                  & 0.569          & 0.566          & 0.561          & \textbf{0.766} & 0.608          & 0.756          & 0.738          & 0.742          & 0.632          \\
DiatomSizeReduction            & 0.980                                                  & 0.817          & 0.948          & 0.905          & \textbf{0.997} & 0.676          & 0.987          & 0.993          & 0.977          & 0.301          \\
DistalPhalanxOutlineAgeGroup   & \textbf{0.799}                                         & 0.655          & 0.755          & 0.645          & 0.763          & 0.640          & 0.727          & 0.741          & 0.755          & 0.732          \\
DistalPhalanxOutlineCorrect    & \textbf{0.804}                                         & 0.627          & 0.670          & 0.734          & 0.801          & 0.583          & 0.775          & 0.754          & 0.754          & 0.683          \\
DistalPhalanxTW                & \textbf{0.755}                                         & 0.676          & 0.719          & 0.669          & 0.727          & 0.597          & 0.698          & 0.669          & 0.676          & 0.669          \\
DodgerLoopDay                  & 0.600                                                  & 0.275          & 0.325          & 0.388          & \textbf{0.675} & 0.338          & 0.562          & -              & -              & 0.150          \\
DodgerLoopGame                 & \textbf{0.942}                                         & 0.833          & 0.797          & 0.855          & \textbf{0.942} & 0.725          & 0.841          & -              & -              & 0.522          \\
DodgerLoopWeeken               & \textbf{0.993}                                         & 0.935          & 0.949          & 0.978          & 0.986          & 0.920          & 0.964          & -              & -              & 0.739          \\
Earthquakes                    & 0.820                                                  & 0.748          & 0.748          & 0.748          & \textbf{0.821} & 0.748          & 0.748          & 0.748          & 0.748          & 0.748          \\
ECG200                         & 0.890                                                  & 0.830          & 0.840          & 0.800          & 0.930          & 0.700          & 0.920          & 0.830          & 0.880          & \textbf{0.940} \\
ECG5000                        & \textbf{0.948}                                         & 0.935          & 0.940          & 0.926          & 0.945          & 0.900          & 0.935          & 0.937          & 0.941          & 0.938          \\
ECGFiveDays                    & \textbf{1.000}                                         & 0.998          & 0.987          & 0.990          & \textbf{1.000} & 0.821          & \textbf{1.000} & 0.999          & 0.878          & 0.972          \\
ElectricDevices                & 0.657                                                  & 0.609          & 0.599          & 0.609          & 0.702          & 0.562          & \textbf{0.721} & 0.700          & 0.686          & 0.560          \\
EOGHorizontalSignal            & \textbf{0.597}                                         & 0.434          & 0.508          & 0.470          & 0.572          & 0.293          & 0.544          & 0.442          & 0.401          & 0.083          \\
EOGVerticalSignal              & 0.489                                                  & 0.320          & 0.423          & 0.312          & 0.459          & 0.290          & \textbf{0.503} & 0.392          & 0.376          & 0.144          \\
FaceAll                        & 0.888                                                  & 0.628          & 0.728          & 0.658          & \textbf{0.929} & 0.689          & 0.805          & 0.766          & 0.813          & 0.714          \\
FaceFour                       & 0.864                                                  & 0.773          & 0.773          & 0.773          & 0.818          & 0.205          & \textbf{0.932} & 0.659          & 0.773          & 0.330          \\
FacesUCR                       & 0.866                                                  & 0.861          & 0.794          & 0.760          & 0.913          & 0.544          & \textbf{0.930} & 0.789          & 0.863          & 0.779          \\
FordA                          & 0.943                                                  & 0.905          & 0.902          & 0.917          & 0.915          & 0.494          & \textbf{0.948} & 0.902          & 0.930          & 0.537          \\
FordB                          & \textbf{0.832}                                         & 0.775          & 0.794          & 0.780          & 0.785          & 0.493          & 0.807          & 0.733          & 0.815          & 0.474          \\
FreezerRegularTrain            & 0.994                                                  & 0.804          & 0.856          & 0.820          & \textbf{0.996} & 0.717          & 0.986          & 0.991          & 0.989          & 0.742          \\
FreezerSmallTrain              & \textbf{0.988}                                         & 0.787          & 0.735          & 0.811          & \textbf{0.988} & 0.721          & 0.894          & 0.982          & 0.979          & 0.501          \\
Fungi                          & 0.941                                                  & 0.667          & 0.667          & 0.677          & 0.946          & 0.263          & \textbf{0.962} & 0.527          & 0.753          & 0.860          \\
GestureMidAirD1                & \textbf{0.638}                                         & 0.315          & 0.431          & 0.308          & 0.592          & 0.423          & 0.631          & 0.431          & 0.369          & 0.038          \\
GestureMidAirD2                & \textbf{0.608}                                         & 0.292          & 0.138          & 0.269          & 0.492          & 0.177          & 0.515          & 0.362          & 0.254          & 0.038          \\
GesturePebbleZ1                & 0.779                                                  & 0.581          & 0.610          & 0.506          & 0.802          & 0.650          & \textbf{0.930} & 0.378          & 0.395          & 0.163          \\
GesturePebbleZ2                & 0.722                                                  & 0.614          & 0.551          & 0.475          & 0.842          & 0.424          & \textbf{0.873} & 0.316          & 0.430          & 0.152          \\
GunPoint                       & \textbf{1.000}                                         & 0.887          & 0.993          & 0.933          & \textbf{1.000} & 0.800          & 0.987          & 0.967          & 0.993          & 0.573          \\
GunPointAgeSpan                & 0.984                                                  & 0.921          & 0.908          & 0.908          & \textbf{1.000} & 0.639          & 0.994          & 0.984          & 0.994          & 0.927          \\
GunPointMaleVersusFemale       & \textbf{1.000}                                         & 0.835          & 0.839          & 0.832          & \textbf{1.000} & 0.718          & \textbf{1.000} & 0.994          & 0.997          & 0.987          \\
GunPointOldVersusYoung         & \textbf{1.000}                                         & \textbf{1.000} & \textbf{1.000} & \textbf{1.000} & \textbf{1.000} & 0.981          & \textbf{1.000} & \textbf{1.000} & \textbf{1.000} & \textbf{1.000} \\
Ham                            & 0.810                                                  & 0.790          & 0.705          & 0.686          & \textbf{0.838} & 0.533          & 0.724          & 0.752          & 0.743          & 0.752          \\
HandOutlines                   & 0.900                                                  & 0.876          & 0.886          & 0.881          & \textbf{0.946} & 0.662          & 0.930          & 0.930          & 0.724          & 0.641          \\
Haptics                        & 0.487                                                  & 0.471          & 0.487          & 0.406          & \textbf{0.546} & 0.334          & 0.536          & 0.474          & 0.396          & 0.208          \\
Herring                        & \textbf{0.703}                                         & 0.594          & 0.609          & 0.594          & 0.656          & 0.594          & 0.641          & 0.594          & 0.594          & 0.594          \\
HouseTwenty                    & \textbf{0.983}                                         & 0.891          & 0.706          & 0.681          & 0.924          & 0.655          & 0.941          & 0.782          & 0.790          & 0.571          \\
InlineSkate                    & 0.353                                                  & 0.258          & 0.318          & 0.242          & \textbf{0.424} & 0.193          & 0.415          & 0.378          & 0.347          & 0.155          \\
InsectEPGRegularTrain          & \textbf{1.000}                                         & \textbf{1.000} & \textbf{1.000} & \textbf{1.000} & \textbf{1.000} & \textbf{1.000} & \textbf{1.000} & \textbf{1.000} & \textbf{1.000} & \textbf{1.000} \\
InsectEPGSmallTrain            & \textbf{1.000}                                         & \textbf{1.000} & 0.451          & \textbf{1.000} & \textbf{1.000} & \textbf{1.000} & \textbf{1.000} & \textbf{1.000} & \textbf{1.000} & 0.474          \\
ItalyPowerDemand               & \textbf{0.976}                                         & 0.968          & 0.956          & 0.969          & 0.966          & 0.614          & 0.961          & 0.928          & 0.955          & 0.934          \\
LargeKitchenAppliances         & 0.848                                                  & 0.787          & 0.784          & 0.776          & 0.853          & 0.416          & \textbf{0.875} & 0.776          & 0.848          & 0.389          \\
Lightning2                     & \textbf{0.951}                                         & 0.672          & 0.721          & 0.721          & 0.934          & 0.639          & 0.869          & 0.869          & 0.836          & 0.738          \\
Lightning7                     & 0.840                                                  & 0.726          & 0.767          & 0.712          & \textbf{0.877} & 0.342          & 0.863          & 0.767          & 0.685          & 0.616          \\
Mallat                         & 0.954                                                  & 0.820          & 0.907          & 0.722          & \textbf{0.974} & 0.412          & 0.915          & 0.871          & 0.922          & 0.123          \\
Meat                           & 0.917                                                  & 0.333          & 0.774          & 0.333          & \textbf{0.967} & 0.583          & \textbf{0.967} & 0.917          & 0.883          & 0.333          \\
MiddlePhalanxOutlineAgeGroup   & \textbf{0.669}                                         & 0.610          & 0.617          & 0.597          & 0.662          & 0.577          & 0.636          & 0.643          & 0.630          & 0.578          \\
MiddlePhalanxOutlineCorrect    & 0.842                                                  & 0.570          & 0.704          & 0.570          & \textbf{0.859} & 0.500          & 0.838          & 0.818          & 0.818          & 0.653          \\
MiddlePhalanxTW                & \textbf{0.630}                                         & 0.558          & 0.578          & 0.571          & 0.617          & 0.552          & 0.591          & 0.571          & 0.610          & 0.558          \\
\end{tabular}}
\end{table*}

\begin{table*}[!ht]
\label{Table:ucr2}
\centering
\scriptsize
\resizebox{\textwidth}{!}{
\begin{tabular}{ccccccccccc}
\toprule
 Dataset                               & \begin{tabular}[c]{@{}c@{}}FreRA\\ \textbf{(Ours)}\end{tabular} & best(T)        & best(F)        & InfoMin        & InfoTS         & AutoTCL        & TS2Vec         & TNC            & TS-TCC         & TF-C           \\ \midrule

MixedShapesRegularTrain        & 0.925                                                  & 0.878          & 0.927          & 0.829          & \textbf{0.935} & 0.624          & 0.922          & 0.911          & 0.855          & 0.400          \\
MixedShapesSmallTrain          & 0.852                                                  & 0.822          & 0.842          & 0.776          & \textbf{0.887} & 0.525          & 0.881          & 0.813          & 0.735          & 0.181          \\
MoteStrain                     & 0.891                                                  & \textbf{0.904} & 0.806          & 0.849          & 0.873          & 0.676          & 0.863          & 0.825          & 0.843          & 0.815          \\
OliveOil                       & 0.800                                                  & 0.400          & 0.752          & 0.400          & \textbf{0.933} & 0.600          & 0.900          & 0.833          & 0.800          & 0.400          \\
OSULeaf                        & \textbf{0.909}                                         & 0.678          & 0.705          & 0.554          & 0.760          & 0.384          & 0.876          & 0.723          & 0.723          & 0.467          \\
Phoneme                        & 0.273                                                  & 0.211          & 0.200          & 0.208          & 0.281          & 0.158          & \textbf{0.312} & 0.180          & 0.242          & 0.104          \\
PickupGestureWimoteZ           & \textbf{0.860}                                         & 0.740          & 0.399          & 0.680          & 0.820          & 0.640          & 0.820          & 0.620          & 0.600          & 0.100          \\
PigCVP                         & 0.611                                                  & 0.303          & 0.667          & 0.207          & 0.653          & 0.130          & \textbf{0.870} & 0.649          & 0.615          & 0.019          \\
PLAID                          & 0.523                                                  & 0.330          & 0.269          & 0.307          & 0.355          & 0.451          & \textbf{0.561} & 0.495          & 0.445          & 0.061          \\
Plane                          & \textbf{1.000}                                         & \textbf{1.000} & \textbf{1.000} & \textbf{1.000} & \textbf{1.000} & \textbf{1.000} & \textbf{1.000} & \textbf{1.000} & \textbf{1.000} & 0.952          \\
PowerCons                      & 0.994                                                  & 0.939          & 0.933          & 0.917          & \textbf{1.000} & 0.861          & 0.972          & 0.933          & 0.961          & 0.894          \\
ProximalPhalanxOutlineAgeGroup & \textbf{0.888}                                         & 0.854          & 0.883          & 0.873          & 0.883          & 0.715          & 0.844          & 0.854          & 0.839          & 0.849          \\
ProximalPhalanxOutlineCorrect  & 0.893                                                  & 0.722          & 0.784          & 0.698          & \textbf{0.927} & 0.820          & 0.900          & 0.866          & 0.873          & 0.801          \\
ProximalPhalanxTW              & \textbf{0.849}                                         & 0.678          & 0.800          & 0.780          & 0.844          & 0.771          & 0.824          & 0.810          & 0.800          & 0.795          \\
RefrigerationDevices           & 0.597                                                  & 0.549          & 0.533          & 0.501          & \textbf{0.624} & 0.360          & 0.589          & 0.565          & 0.563          & 0.299          \\
Rock                           & 0.700                                                  & 0.480          & 0.480          & 0.500          & \textbf{0.760} & 0.400          & 0.700          & 0.580          & 0.600          & 0.280          \\
ScreenType                     & 0.491                                                  & 0.368          & \textbf{0.656} & 0.421          & 0.493          & 0.355          & 0.411          & 0.509          & 0.419          & 0.344          \\
ShakeGestureWiimoteZ           & \textbf{0.980}                                         & 0.840          & 0.800          & 0.840          & 0.920          & 0.787          & 0.940          & 0.820          & 0.860          & 0.100          \\
ShapeletSim                    & \textbf{1.000}                                         & \textbf{1.000} & 0.978          & \textbf{1.000} & 0.856          & 0.533          & \textbf{1.000} & 0.589          & 0.683          & 0.467          \\
ShapesAll                      & 0.822                                                  & 0.368          & 0.627          & 0.415          & 0.852          & 0.802          & \textbf{0.905} & 0.788          & 0.773          & 0.582          \\
SmoothSubspace                 & 0.987                                                  & 0.873          & 0.893          & 0.860          & \textbf{1.000} & 0.913          & 0.993          & 0.913          & 0.953          & 0.653          \\
SonyAIBORobotSurface2          & \textbf{0.957}                                         & 0.815          & 0.867          & 0.807          & 0.953          & 0.769          & 0.890          & 0.834          & 0.907          & 0.846          \\
SonyAIBORobotSurfacel          & \textbf{0.953}                                         & 0.885          & 0.906          & 0.854          & 0.927          & 0.778          & 0.903          & 0.804          & 0.899          & 0.804          \\
StarLightCurves                & \textbf{0.973}                                         & 0.874          & 0.964          & 0.891          & \textbf{0.973} & 0.849          & 0.971          & 0.968          & 0.967          & 0.855          \\
Strawberry                     & 0.965                                                  & 0.835          & 0.876          & 0.849          & \textbf{0.978} & 0.614          & 0.965          & 0.951          & 0.965          & 0.832          \\
SwedishLeaf                    & \textbf{0.950}                                         & 0.789          & 0.874          & 0.787          & \textbf{0.950} & 0.794          & 0.942          & 0.880          & 0.923          & 0.891          \\
Symbols                        & \textbf{0.980}                                         & 0.912          & 0.943          & 0.847          & 0.979          & 0.699          & 0.976          & 0.885          & 0.916          & 0.174          \\
SyntheticControl               & \textbf{1.000}                                         & 0.990          & 0.980          & 0.997          & \textbf{1.000} & 0.880          & 0.997          & \textbf{1.000} & 0.990          & 0.760          \\
ToeSegmentation1               & \textbf{0.961}                                         & 0.943          & 0.921          & 0.939          & 0.934          & 0.496          & 0.947          & 0.864          & 0.930          & 0.570          \\
ToeSegmentation2               & \textbf{0.931}                                         & 0.815          & 0.800          & 0.869          & 0.915          & 0.692          & 0.915          & 0.831          & 0.877          & 0.338          \\
Trace                          & \textbf{1.000}                                         & 0.880          & \textbf{1.000} & 0.920          & \textbf{1.000} & 0.650          & \textbf{1.000} & \textbf{1.000} & \textbf{1.000} & 0.690          \\
TwoLeadECG                     & 0.987                                                  & 0.867          & 0.984          & 0.901          & \textbf{0.998} & 0.565          & 0.987          & 0.993          & 0.976          & 0.921          \\
TwoPatterns                    & \textbf{1.000}                                         & 0.997          & 0.999          & 0.957          & \textbf{1.000} & 0.264          & \textbf{1.000} & \textbf{1.000} & 0.999          & 0.654          \\
UMD                            & \textbf{1.000}                                         & 0.938          & 0.617          & 0.979          & \textbf{1.000} & 0.590          & \textbf{1.000} & 0.993          & 0.986          & 0.778          \\
Wafer                          & 0.996                                                  & 0.960          & 0.956          & 0.959          & \textbf{0.998} & 0.921          & \textbf{0.998} & 0.994          & 0.994          & 0.994          \\
Wine                           & 0.833                                                  & 0.500          & 0.500          & 0.500          & \textbf{0.963} & 0.500          & 0.889          & 0.759          & 0.778          & 0.500          \\
WordSynonyms                   & 0.619                                                  & 0.350          & 0.384          & 0.359          & \textbf{0.704} & 0.497          & \textbf{0.704} & 0.630          & 0.531          & 0.487          \\
Worms                          & \textbf{0.792}                                         & 0.558          & 0.636          & 0.623          & 0.753          & 0.403          & 0.701          & 0.623          & 0.753          & 0.429          \\
WormsTwoClass                  & 0.831                                                  & 0.753          & 0.714          & 0.714          & \textbf{0.857} & 0.558          & 0.805          & 0.727          & 0.753          & 0.584          \\
Yoga                           & 0.808                                                  & 0.693          & 0.699          & 0.607          & 0.869          & 0.536          & \textbf{0.887} & 0.812          & 0.791          & 0.688          \\ \midrule
Avg. ACC                       & \textbf{0.850}                                         & 0.723          & 0.744          & 0.718          & 0.849          & 0.598          & 0.845          & 0.776          & 0.780          & 0.542          \\
Avg. RANK                      & 1.940                                                  & 6.320          & 5.750          & 6.470          & \textbf{1.930} & 8.420          & 2.670          & 4.810          & 4.670          & 8.330    \\
\bottomrule
\end{tabular}}
\end{table*}

\begin{table*}[!t]
\caption{The overall classification result of 30 multivariate time series datasets from the UEA archive. The best performance is highlighted in \textbf{bold}. }
\label{Table:uea}
\centering
\scriptsize
\resizebox{\textwidth}{!}{
\begin{tabular}{ccccccccccc}
\toprule
Dataset                    & \begin{tabular}[c]{@{}c@{}}FreRA\\  \textbf{(ours)}\end{tabular}          & best(T)        & best(F)        & InfoMin        & InfoTS         & AutoTCL        & TS2Vec         & TNC            & TS-TCC         & TF-C  \\ \midrule
Articulary WordRecognition & \textbf{0.990} & 0.887          & 0.947          & 0.913          & 0.987          & 0.983          & 0.987          & 0.973          & 0.953          & 0.467 \\
AtrialFibrillation         & \textbf{0.467} & 0.400          & 0.333          & 0.267          & 0.200          & \textbf{0.467} & 0.200          & 0.133          & 0.267          & 0.040 \\
BasicMotions               & \textbf{1.000} & \textbf{1.000} & \textbf{1.000} & \textbf{1.000} & 0.975          & \textbf{1.000} & 0.975          & 0.975          & \textbf{1.000} & 0.475 \\
CharacterTrajectories      & 0.991          & 0.953          & 0.976          & 0.990          & 0.974          & 0.976          & \textbf{0.995} & 0.967          & 0.985          & 0.090 \\
Cricket                    & \textbf{1.000} & 0.986          & 0.986          & 0.958          & 0.986          & \textbf{1.000} & 0.972          & 0.958          & 0.917          & 0.125 \\
DuckDuckGeese              & \textbf{0.760} & 0.660          & 0.660          & 0.700          & 0.540          & 0.700          & 0.680          & 0.460          & 0.380          & 0.340 \\
Eigen Worms                & 0.863          & 0.779          & 0.840          & 0.794          & 0.733          & \textbf{0.901} & 0.847          & 0.840          & 0.779          & -     \\
Epilepsy                   & \textbf{0.993} & 0.906          & 0.935          & 0.920          & 0.971          & 0.978          & 0.964          & 0.957          & 0.957          & 0.217 \\
ERing                      & 0.919          & 0.885          & 0.907          & 0.904          & \textbf{0.949} & 0.944          & 0.874          & 0.852          & 0.904          & 0.167 \\
EthanolConcentration       & 0.323          & 0.297          & 0.262          & 0.243          & 0.281          & \textbf{0.354} & 0.308          & 0.297          & 0.285          & 0.247 \\
FaceDetection              & \textbf{0.581} & 0.564          & 0.521          & 0.560          & 0.534          & \textbf{0.581} & 0.501          & 0.536          & 0.544          & 0.502 \\
FingerMovements            & 0.610          & 0.530          & 0.500          & 0.500          & 0.630          & \textbf{0.640} & 0.480          & 0.470          & 0.460          & 0.510 \\
HandMovementDirection      & \textbf{0.514} & 0.378          & 0.365          & 0.324          & 0.392 & 0.432          & 0.338          & 0.324          & 0.243          & 0.405 \\
Handwriting                & \textbf{0.593} & 0.501          & 0.469          & 0.569          & 0.452          & 0.384          & 0.515          & 0.249          & 0.498          & 0.051 \\
Heartbeat                  & \textbf{0.785} & 0.741          & 0.746          & 0.737          & 0.722          & \textbf{0.785} & 0.683          & 0.746          & 0.751          & 0.737 \\
Japanese Vowels            & 0.965          & 0.938          & 0.938          & 0.938          & \textbf{0.984} & \textbf{0.984} & \textbf{0.984} & 0.978          & 0.930          & 0.135 \\
Libras                     & \textbf{0.911} & 0.761          & 0.822          & 0.800          & 0.883          & 0.833          & 0.867          & 0.817          & 0.822          & 0.067 \\
LSST                       & 0.494          & 0.393          & 0.391          & 0.473          & 0.591          & 0.554          & 0.537          & \textbf{0.595} & 0.474          & 0.314 \\
MotorImagery               & 0.550          & 0.530          & 0.540          & 0.530          & \textbf{0.630} & 0.570          & 0.510          & 0.500          & 0.610          & 0.500 \\
NATOPS                     & 0.900          & 0.867          & 0.872          & 0.822          & 0.933          & \textbf{0.944} & 0.928          & 0.911          & 0.822          & 0.533 \\
PEMS-SF                    & 0.746          & 0.653          & 0.671          & 0.699          & 0.751          & \textbf{0.838} & 0.682          & 0.699          & 0.734          & 0.312 \\
PenDigits                  & 0.973          & 0.946          & 0.946          & 0.970          & \textbf{0.990} & 0.984          & 0.989          & 0.979          & 0.974          & 0.236 \\
PhonemeSpectra             & \textbf{0.274} & 0.226          & 0.226          & 0.240          & 0.249          & 0.218          & 0.233          & 0.207          & 0.252          & 0.026 \\
RacketSports               & 0.888          & 0.816          & 0.796          & 0.822          & 0.855          & \textbf{0.914} & 0.855          & 0.776          & 0.816          & 0.480 \\
SelfRegulationSCP1         & \textbf{0.908} & 0.836          & 0.870          & 0.867          & 0.874          & 0.891          & 0.812          & 0.799          & 0.823          & 0.502 \\
SelfRegulationSCP2         & \textbf{0.622} & 0.589          & 0.594          & \textbf{0.622} & 0.578          & 0.578          & 0.578          & 0.550          & 0.533          & 0.500 \\
SpokenArabicDigits         & \textbf{0.984} & 0.935          & 0.871          & 0.981          & 0.947          & 0.925          & 0.932          & 0.934          & 0.970          & 0.100 \\
StandWalkJump              & \textbf{0.667} & 0.400          & 0.333          & 0.333          & 0.467          & 0.533          & 0.467          & 0.400          & 0.333          & 0.333 \\
UWaveGestureLibrary        & \textbf{0.900} & 0.794          & 0.800          & 0.872          & 0.884          & 0.893          & 0.884          & 0.759          & 0.753          & 0.125 \\
InsectWingbeat             & 0.462          & 0.363          & 0.456          & 0.443          & 0.470          & \textbf{0.488} & 0.466          & 0.469          & 0.264          & 0.108 \\
\midrule
Avg. ACC                   & \textbf{0.754} & 0.684          & 0.686          & 0.693          & 0.714          & 0.742          & 0.704          & 0.670          & 0.668          & 0.298 \\
Avg. RANK                  & \textbf{2.133} & 5.967          & 5.800          & 5.500          & 3.967          & 2.600          & 4.967          & 6.433          & 6.033          & 9.276 \\ \bottomrule
\end{tabular}}
\end{table*}

\begin{table*}[!t]
\caption{The performance of the selected sets of 11 time-domain augmentations on the three HAR datasets. The
best performance is highlighted in \textbf{bold}, and the second-best performance is {\ul underlined}. `t\_flip’, `t\_warp’, `perm\_jit’ and `jit\_scal’ are short
for \texttt{time-flipping}, \texttt{time-warping}, \texttt{permutation-and-jitter} and \texttt{jitter-and-scale}.}
\label{Table:har-t}
\centering
\scriptsize
\resizebox{\textwidth}{!}{
\begin{tabular}{ccccccccccccc}
\toprule
Dataset & \begin{tabular}[c]{@{}c@{}}FreRA\\  \textbf{(ours)}\end{tabular}          & jit         & scale & negation & perm  & shuffling & t\_flip & t\_warp & resample    & rotation & perm\_jit   & jit\_scal \\ \midrule
UCIHAR  & \textbf{0.975} & 0.958       & 0.940 & 0.892    & 0.910 & 0.913     & 0.917   & 0.934   & 0.947       & 0.596    & {\ul 0.959} & 0.945     \\
MS      & \textbf{0.982} & 0.930       & 0.914 & 0.813    & 0.927 & 0.910     & 0.915   & 0.925   & {\ul 0.956} & 0.887    & 0.948       & 0.915     \\
WISDM   & \textbf{0.972} & {\ul 0.942} & 0.928 & 0.901    & 0.932 & 0.925     & 0.884   & 0.910   & {\ul 0.942} & 0.872    & 0.932       & 0.927   \\
\bottomrule
\end{tabular}}
\end{table*}

\end{document}